\documentclass{article}
\usepackage{iclr2027_conference,times}
\iclrfinalcopy

\usepackage{amsmath,amssymb,amsthm,mathtools,booktabs,microtype}
\usepackage{algorithm}
\usepackage{float}
\usepackage{algpseudocode}
\usepackage{enumitem}
\usepackage{graphicx}
\usepackage{hyperref}
\usepackage{url}
\usepackage{booktabs}
\usepackage{tabularx}
\usepackage{wrapfig}
\usepackage{capt-of}
\usepackage{placeins}
\hypersetup{hidelinks,
  pdftitle={Score-Preserving Continual Lasso under Task Shift},
  pdfauthor={Jingyi Guo, Runze Li, Lingzhou Xue}}

\newtheorem{theorem}{Theorem}
\newtheorem{proposition}{Proposition}
\newtheorem{example}{Example}
\newtheorem{lemma}{Lemma}
\newtheorem{corollary}{Corollary}

\newtheorem{condition}{Condition}

\newcommand{\R}{\mathbb{R}}
\newcommand{\norm}[1]{\left\lVert #1\right\rVert}
\newcommand{\argmin}{\mathop{\mathrm{arg\,min}}}
\newcommand{\E}{\mathbb{E}}
\newcommand{\Pbb}{\mathbb{P}}
\newcommand{\risk}{\mathcal{R}}
\newcommand{\mse}{\mathrm{MSE}}

\title{Score-Preserving Continual Lasso under Task Shift}

\author{Jingyi Guo \quad Runze Li \quad Lingzhou Xue\\
Pennsylvania State University\\
\texttt{jxg6178@psu.edu} \quad \texttt{rzli@psu.edu} \quad \texttt{lzxue@psu.edu}}

\begin{document}
\raggedbottom
\maketitle
\lhead{Preprint}

\begin{abstract}
Continual learning (CL) aims to learn from sequential tasks while mitigating catastrophic forgetting when historical data are no longer accessible. We study continual learning for high-dimensional sparse linear regression, where $\ell_1$ regularization creates a difficulty absent from continual ridge regression: that recursively reusing an \(\ell_1\)-penalized estimator can propagate earlier shrinkage and accumulate regularization bias across tasks. We propose score-preserving continual Lasso (SP-CL), which separates sparse estimation from state transmission by preserving the historical squared-loss score. Under squared loss, its cumulative estimator exactly recovers a full-history Lasso minimizer. When task coefficients coincide, it inherits the full-history sparse rate. For related task-specific coefficients, SP-CL uses a regularized current-task correction that leaves the transmitted state unchanged and attains the offline source-assisted rate under an oracle related-task condition. We also derive Overall/BWT/FWT decompositions that distinguish historical information loss, task conflict, and current-task specialization, and validate the resulting regimes in simulation.
\end{abstract}

\section{Introduction}
\label{sec:introduction}

Continual learning (CL) asks a model to learn from a sequence of tasks when
data from earlier tasks are no longer fully accessible. A central difficulty
is \emph{catastrophic forgetting}: adapting to a new task can degrade
performance on tasks learned earlier. At the same time, excessive stability
can prevent useful adaptation. A successful continual learner must therefore
maintain strong performance over observed tasks, preserve useful earlier
knowledge, and exploit that knowledge on new tasks. These goals are commonly
reflected by Overall performance, backward transfer (BWT), and forward
transfer (FWT), respectively \citep{parisi2019,wang2024survey}.

Regularization-based methods address this stability--adaptation problem by
constraining new solutions toward parameters learned from previous tasks.
For two fixed-design regression tasks, \citet{li2023fixed} derived tight risk
bounds for $\ell_2$-regularized updates and characterized the
forgetting--intransigence trade-off. \citet{zhao2024} extended this perspective
to sequences of tasks through generalized matrix-valued $\ell_2$
regularization, obtaining offline-oracle-order estimation while balancing
forward and backward transfer. In a complementary realizable setting,
\citet{levinstein2025} showed that fixed or increasing isotropic regularization
can achieve near-optimal or optimal task-horizon convergence rates.

More recent work has examined genuinely high-dimensional regimes.
\citet{zhao2025ridge} derived asymptotic prediction risks when dimension grows
proportionally with task sample size, characterizing Overall performance, BWT,
and FWT under a common coefficient and heterogeneous task covariances.
\citet{karpel2026} further allowed multiple task-specific linear teachers and
characterized optimal isotropic $\ell_2$ regularization in overparameterized
continual regression. Thus, existing theory now covers multi-task,
high-dimensional, and heterogeneous-task settings, but its continual
regularization mechanism remains quadratic rather than sparse. Table~\ref{tab:positioning} summarizes these
representative theoretical settings and capabilities most relevant to our
problem.

\begin{table}[!t]
    \centering
    \caption{
Comparison with representative theoretical work on regularization-based
continual linear regression.
``Task-specific coefficients'' allows different population coefficients across
tasks; ``Current-task correction'' denotes an explicit second-stage correction
using the current task.
}
    \label{tab:positioning}
    \scriptsize
    \setlength{\tabcolsep}{3.8pt}
    \renewcommand{\arraystretch}{1.08}
    \begin{tabular}{@{}lcccc@{}}
        \toprule
        \textbf{Work}
        & \textbf{Sparse}
        & \textbf{Task-specific}
        & \textbf{Current-task}
        & \textbf{Overall/BWT/FWT} \\
        &
        &
        \textbf{coefficients}
        & \textbf{correction}
        & \\
        \midrule

        \citet{li2023fixed}
        & $\times$
        & $\times$
        & $\times$
        & $\times$ \\

        \citet{zhao2024}
        & $\times$
        & $\times$
        & $\times$
        & $\checkmark$ \\

        \citet{zhao2025ridge}
        & $\times$
        & $\times$
        & $\times$
        & $\checkmark$ \\

        \citet{karpel2026}
        & $\times$
        & $\checkmark$
        & $\times$
        & $\times$ \\

        \midrule

        \textbf{SP-CL}
        & $\checkmark$
        & $\checkmark$
        & $\checkmark$
        & $\checkmark$ \\

        \bottomrule
    \end{tabular}
\end{table}

Empirical work uses sparse computation to reduce continual interference without
replay or task information \citep{bricken2023sdm}. We study its statistical
counterpart: high-dimensional predictors with sparse task-relevant coefficients,
where $\ell_1$ regularization creates a state-transmission problem absent from
quadratic methods.

High-dimensional streaming methods support $\ell_1$-regularized estimation
from sequential summaries but do not study continual task shift \citep{luo2021},
while sparse transfer learning permits coefficient adaptation but assumes that
source samples remain accessible \citep{li2022transfer}.

Three issues remain for sparse continual regression.

\emph{First, sparsity changes what should be transmitted.}
Under squared loss, historical data are exactly represented by a quadratic
objective through its cumulative curvature and score, so a quadratic state can
preserve the information needed from inaccessible data. With $\ell_1$
regularization, however, the fitted parameter also contains a penalty
subgradient; recursively transmitting that penalized center can therefore
propagate shrinkage that does not belong to the historical data objective.

\emph{Second, task variation creates a distinct adaptation problem.}
Existing high-dimensional $\ell_2$ theory can accommodate task-specific
coefficients by tuning regularization to balance historical retention and
current adaptation \citep{karpel2026}. Our question is different: can related
history first reduce estimation variance through cumulative information, and
can the resulting pooled bias then be corrected toward the current sparse
coefficient without revisiting historical samples?

\emph{Third, task heterogeneity complicates the interpretation of forgetting.}
Old-task degradation may reflect genuine loss of historical information, but it
may also arise because the predictor deliberately moves toward a different
current-task optimum. Overall performance, BWT, and FWT must therefore be
interpreted jointly when comparing cumulative retention with current-task
specialization.

In this paper, we aim to answer the following question:

\begin{center}
\emph{
Can sparse continual learning preserve full-history information without
accumulating past shrinkage, while adapting efficiently to related tasks?
}
\end{center}

We propose \emph{score-preserving continual Lasso} (SP-CL), which separates
sparse estimation from state transmission: $\ell_1$ regularization produces a
sparse cumulative estimator, while a corrected quadratic state preserves only
the historical squared-loss score---it \emph{regularizes locally but transmits
the data score globally}. Under task shift, SP-CL uses historical aggregation
to reduce estimation uncertainty and a sparse current-batch correction to adapt
toward the current coefficient without modifying the transmitted state. For evaluation, we derive exact identities that decompose Overall, FWT, and BWT into current-task gain, historical cost, task conflict, and estimation error.

Our contributions are threefold.
\begin{itemize}
  \item \textbf{Score-preserving sparse transmission.} We identify how
  recursively transmitting an $\ell_1$-penalized estimate propagates past
  penalty subgradients, and develop SP-CL to remove this contamination from the
  transmitted state. Its cumulative estimator is exactly a full-history
  pooled-Lasso minimizer, yielding the pooled-Lasso sparse rate under a shared
  coefficient.

  \item \textbf{Efficient adaptation under task shift.} Under task shift,
  SP-CL uses a current-batch sparse correction to adapt the cumulative estimate
  without modifying the transmitted state. Under a related-task condition, the
  resulting specialist attains the offline two-stage rate using
  only the continual state and current batch.

  \item \textbf{Continual evaluation under heterogeneity.} We separate
  catastrophic forgetting from task conflict in standard CL evaluation. Exact
  decompositions of Overall performance, BWT, and FWT distinguish historical
  information loss, task conflict, and current-task specialization, and quantify
  the current-gain versus historical-cost trade-off between cumulative and
  current-adapted prediction.
\end{itemize}

\section{Problem Setup and Evaluation}
\label{sec:stream}

\paragraph{Data and continual-learning protocol.}
We consider a standard continual-learning problem in which tasks
$t=1,\ldots,T$ arrive sequentially. At task $t$, the learner observes
$\mathcal D_t=\{(x_{ti},y_{ti})\}_{i=1}^{n_t}$ generated from
\begin{equation}
    y_{ti}=x_{ti}^{\top}\beta_t^\star+\varepsilon_{ti},
    \qquad x_{ti}\in\mathbb R^p,
    \label{eq:model}
\end{equation}
where $\beta_t^\star$ is the true parameter for task $t$ and may vary across
tasks. We focus on a high-dimensional sparse regime in which $p$ may exceed
the per-task sample size $n_t$, while the task-relevant coefficients are
sparse. Write
$X_t=(x_{t1},\ldots,x_{tn_t})^\top$,
$Y_t=(y_{t1},\ldots,y_{tn_t})^\top$, and
$N_t=\sum_{j=1}^t n_j$.
When task $t$ arrives, the learner may use the current batch $\mathcal D_t$
and its transmitted state, but cannot revisit raw observations from earlier
tasks. Distributional and task-relatedness conditions are introduced later
when needed for the statistical guarantees.

\paragraph{Task performance and continual-learning metrics.}
For regression, we measure task performance by the population mean-squared
prediction error. For a coefficient $b\in\R^p$ and a fresh task-$j$
observation $(x,y)$, write $\Sigma_j=\E_j(xx^{\mathsf T})$,
$\mse_j(b)=\E_j[(y-x^{\mathsf T}b)^2]$ and
$\risk_j(b)=\mse_j(b)-\mse_j(\beta_j^\star)
=\norm{b-\beta_j^\star}_{\Sigma_j}^2$. Thus, same-task differences in MSE
equal differences in excess prediction error. For tasks observed through time
$t$, let $\pi_{j,t}=n_j/N_t$, define
$\overline\mse_t(b)=\sum_{j=1}^t\pi_{j,t}\mse_j(b)$, and let
$w_t^\star\in\argmin_b\overline\mse_t(b)$ denote the best single population
coefficient for the tasks observed so far. With equal batch sizes,
$\overline\mse_t$ is the usual average task MSE.

For predictors $\{\hat b_t\}_{t=1}^T$ produced after successive tasks, we
consider the standard Overall, backward-transfer (BWT), and forward-transfer
(FWT) metrics. Here $\hat\beta_t^{\mathrm{cur}}$ denotes the current-only
Lasso benchmark defined in Section~\ref{sec:three-estimators}:
\par\noindent
\begin{minipage}[t]{0.63\linewidth}
  \raggedright
  $\displaystyle \mathrm{BWT}_t(\hat b)
  =\frac{1}{t-1}\sum_{j<t}
  \{\mse_j(\hat b_j)-\mse_j(\hat b_t)\}$
\end{minipage}\hfill
\begin{minipage}[t]{0.35\linewidth}
  \raggedleft
  $\displaystyle \mathrm{Overall}_t(\hat b)
  =\frac{1}{t}\sum_{j=1}^t\mse_j(\hat b_t)$
\end{minipage}
\par\vspace{2pt}\noindent
\begin{minipage}[t]{0.63\linewidth}
  \raggedright
  $\displaystyle \mathrm{FWT}_t(\hat b)
  =\frac{1}{t-1}\sum_{k=2}^t
  \{\mse_k(\hat\beta_k^{\mathrm{cur}})-\mse_k(\hat b_k)\}$
\end{minipage}
\par
Overall evaluates the current predictor on the tasks seen so far; for $t\ge2$,
BWT compares old-task performance with its task-time reference, and FWT compares
each predictor with current-only fitting. Positive BWT or FWT indicates
improvement relative to its reference; the three metrics need not rank predictors
in the same way when task parameters differ.

\section{Score-preserving continual Lasso}
\label{sec:method}

\subsection{From \texorpdfstring{$\ell_2$}{l2} Regularization to Sparse State Transmission}
\label{sec:quadratic-state}
\label{sec:direct-transmission}

Regularization-based continual regression commonly updates the current
parameter by penalizing its deviation from an estimate carried from earlier
tasks. Generalized $\ell_2$ regularization uses a matrix-weighted penalty of
the schematic form
\[
 \widehat w_t^{\mathrm{GR}}\in\argmin_w
 \left\{\ell_t(w)+\frac12\norm{w-\widehat w_{t-1}^{\mathrm{GR}}}_{A_t}^2\right\},
 \qquad \norm{v}_{A}^2=v^{\mathsf T}Av,\quad A\succeq0,
\]
where $\ell_t$ is the current-task loss \citep{zhao2024}. For squared loss,
this quadratic penalty provides a natural way to carry information from
earlier tasks. The following lemma identifies the curvature and score of the
full historical least-squares objective.

\begin{lemma}[Exact quadratic representation of historical squared loss]
\label{lem:quadratic-closure}
Let $H_t:=\sum_{j=1}^t X_j^{\mathsf T}X_j$ and
$G_t:=\sum_{j=1}^t X_j^{\mathsf T}Y_j$. For any $u_t\in\R^p$ satisfying
$H_tu_t=G_t$ (such a vector exists), and for every $w\in\R^p$,
\begin{equation}
  \frac12\sum_{j=1}^t\norm{Y_j-X_jw}_2^2
  =\frac12\norm{w-u_t}_{H_t}^2+C_t,
  \label{eq:quadratic-closure}
\end{equation}
where $C_t$ does not depend on $w$.
\end{lemma}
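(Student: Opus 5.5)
Two things need checking: that a solution $u_t$ of $H_tu_t=G_t$ exists, and that the identity \eqref{eq:quadratic-closure} then follows by expanding both sides as quadratics in $w$.

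\emph{Existence.} Stack the data as $X=(X_1^{\mathsf T},\ldots,X_t^{\mathsf T})^{\mathsf T}\in\R^{N_t\times p}$ and $Y=(Y_1^{\mathsf T},\ldots,Y_t^{\mathsf T})^{\mathsf T}$. Then $H_t=X^{\mathsf T}X$ and $G_t=X^{\mathsf T}Y$. Since $\mathrm{range}(X^{\mathsf T}X)=\mathrm{range}(X^{\mathsf T})$, the vector $G_t$ lies in the range of $H_t$. To see the range equality, note that the null spaces agree: $X^{\mathsf T}Xv=0$ gives $\norm{Xv}_2^2=0$, hence $Xv=0$. Taking orthogonal complements gives equal ranges. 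So the normal equations are consistent, and one explicit solution is $u_t=H_t^{+}G_t$. This is the only step with any content when $p>N_t$ and $H_t$ is singular, so I will state it carefully.

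\emph{Identity.} Expand the left-hand side:
\[
\tfrac12\sum_{j=1}^t\norm{Y_j-X_jw}_2^2=\tfrac12 w^{\mathsf T}H_tw-w^{\mathsf T}G_t+\tfrac12\sum_{j=1}^t\norm{Y_j}_2^2 .
\]
Expand the right-hand side:
\[
\tfrac12\norm{w-u_t}_{H_t}^2=\tfrac12 w^{\mathsf T}H_tw-w^{\mathsf T}H_tu_t+\tfrac12 u_t^{\mathsf T}H_tu_t .
\]
Here symmetry of $H_t$ is used. Substituting $H_tu_t=G_t$ makes the quadratic and linear terms match. The two sides therefore differ by
\[
C_t=\tfrac12\sum_{j}\norm{Y_j}_2^2-\tfrac12 u_t^{\mathsf T}H_tu_t ,
\]
which does not depend on $w$.

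\emph{Uniqueness of $C_t$.} I would also check that $C_t$ does not depend on which solution $u_t$ is chosen. If $u,u'$ both solve the system, then $u-u'\in\ker H_t$, so $u^{\mathsf T}H_tu=u'^{\mathsf T}H_tu'=u^{\mathsf T}G_t$. One can also identify $C_t=\tfrac12\norm{Y-Xu_t}_2^2$, the minimal residual sum of squares.
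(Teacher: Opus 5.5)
Your proposal is correct and follows the paper's proof essentially step for step. Like the paper, you stack the batches, use $\operatorname{Col}(X^{\mathsf T}X)=\operatorname{Col}(X^{\mathsf T})$ to get $G_t\in\operatorname{Col}(H_t)$, and then expand and complete the square; your null-space justification of the range equality and your check that $C_t=\tfrac12\norm{Y-Xu_t}_2^2$ does not depend on the choice of $u_t$ are correct details the paper leaves implicit.
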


The proof is given in Appendix~\ref{app:method-proofs}. Hence $(H_t,u_t)$
represents the historical squared-loss objective up to a constant, and its
score is $S_t(w):=G_t-H_tw=H_t(u_t-w)$. We call $u_t$ a \emph{score carrier}.
This score contains only historical data information, not
first-order contributions from taskwise regularization. A penalized estimate,
however, need not satisfy this data-score equation.

For this subsection, write $\lambda_t:=\lambda_{w,t}$. The most direct sparse
analogue keeps the parameter-centered quadratic structure and adds an
$\ell_1$ penalty. We call it \emph{direct transmission of the penalized
center} (direct-center transmission):
\begin{equation}
  \widehat w_t^{\mathrm{dir}}
  \in\argmin_w\left\{
    \frac12\norm{Y_t-X_tw}_2^2
    +\frac12\norm{w-\widehat w_{t-1}^{\mathrm{dir}}}_{H_{t-1}}^2
    +N_t\lambda_t\norm{w}_1
  \right\},
  \label{eq:direct-stage}
\end{equation}
with $H_0=0$. Let
$z_t^{\mathrm{dir}}\in\partial\norm{\widehat w_t^{\mathrm{dir}}}_1$ be a KKT
subgradient.

\begin{proposition}[Exact accumulation of penalty subgradients]
\label{thm:direct-accumulation}
Set $G_0=H_0\widehat w_0^{\mathrm{dir}}=0$. Then, deterministically, for every
$t$,
\begin{equation}
  G_t-H_t\widehat w_t^{\mathrm{dir}}
  =\sum_{j=1}^t N_j\lambda_jz_j^{\mathrm{dir}}.
  \label{eq:direct-accumulation}
\end{equation}
\end{proposition}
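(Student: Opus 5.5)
The plan is induction on $t$, driven by the KKT condition of the direct-center problem \eqref{eq:direct-stage}. Nothing statistical enters. We use only that the objective is convex, so any minimizer satisfies stationarity with some subgradient, and $z_t^{\mathrm{dir}}$ is by definition that KKT subgradient. The base case $t=0$ is the convention $G_0-H_0\widehat w_0^{\mathrm{dir}}=0$, which matches the empty sum.

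\textbf{KKT step.} Differentiate the smooth part of \eqref{eq:direct-stage} at $w=\widehat w_t^{\mathrm{dir}}$:
\[
 -X_t^{\mathsf T}(Y_t-X_t\widehat w_t^{\mathrm{dir}})+H_{t-1}(\widehat w_t^{\mathrm{dir}}-\widehat w_{t-1}^{\mathrm{dir}})+N_t\lambda_t z_t^{\mathrm{dir}}=0 .
\]
Use $H_t=H_{t-1}+X_t^{\mathsf T}X_t$ and $G_t=G_{t-1}+X_t^{\mathsf T}Y_t$. Rearranging gives
\[
 G_t-H_t\widehat w_t^{\mathrm{dir}} = X_t^{\mathsf T}Y_t - H_{t-1}\widehat w_{t-1}^{\mathrm{dir}} - X_t^{\mathsf T}X_t\widehat w_t^{\mathrm{dir}} - H_{t-1}\widehat w_t^{\mathrm{dir}} + H_{t-1}\widehat w_{t-1}^{\mathrm{dir}} + \ldots
\]
It is cleaner to write this as
\[
 G_t-H_t\widehat w_t^{\mathrm{dir}}=(G_{t-1}-H_{t-1}\widehat w_{t-1}^{\mathrm{dir}})+\bigl[X_t^{\mathsf T}Y_t-X_t^{\mathsf T}X_t\widehat w_t^{\mathrm{dir}}-H_{t-1}(\widehat w_t^{\mathrm{dir}}-\widehat w_{t-1}^{\mathrm{dir}})\bigr].
\]
By the KKT identity, the bracket equals $N_t\lambda_t z_t^{\mathrm{dir}}$.

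\textbf{Induction step.} The display above is the one-step recursion
\[
 G_t-H_t\widehat w_t^{\mathrm{dir}}=\bigl(G_{t-1}-H_{t-1}\widehat w_{t-1}^{\mathrm{dir}}\bigr)+N_t\lambda_t z_t^{\mathrm{dir}}.
\]
Telescoping from the base case then gives \eqref{eq:direct-accumulation}.

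\textbf{Main obstacle.} The only delicate point is bookkeeping. The recursion is in terms of $G_{t-1}$, the true cumulative score, and not $H_{t-1}\widehat w_{t-1}^{\mathrm{dir}}$. So we must add and subtract $G_{t-1}$ explicitly, rather than treat the previous center as a score carrier as in Lemma~\ref{lem:quadratic-closure}.

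We also need to check that no invertibility of $H_{t-1}$ is required. It is not, because only the products $H_{t-1}\widehat w$ appear. Existence of a minimizer holds because the objective is convex, coercive along directions where the $\ell_1$ term grows, and bounded below; this also follows from the Lasso-type structure. If $\lambda_t=0$ for some $t$, the identity still holds with that term vanishing.
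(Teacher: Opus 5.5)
Your proof is correct and is essentially the paper's argument. The KKT condition of \eqref{eq:direct-stage}, together with $H_t=H_{t-1}+X_t^{\mathsf T}X_t$ and $G_t=G_{t-1}+X_t^{\mathsf T}Y_t$, gives the recursion $G_t-H_t\widehat w_t^{\mathrm{dir}}=(G_{t-1}-H_{t-1}\widehat w_{t-1}^{\mathrm{dir}})+N_t\lambda_tz_t^{\mathrm{dir}}$, which telescopes from the zero initialization; you should just delete the abandoned first display ending in ``$+\ldots$''.
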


The proof is given in Appendix~\ref{app:method-proofs}. The left-hand side is
the full-history data score, whereas the right-hand side consists entirely of
accumulated $\ell_1$-penalty subgradients. Thus, direct-center transmission
mixes persistent data information with task-local regularization. The terms
need not grow monotonically, but when they align the distortion can be severe,
as the following example shows.

\begin{example}[Support collapse under recursive penalization]
\label{ex:repeated-shrinkage}
Consider noiseless tasks with the same sparse coefficient
$\beta^\star\ne0$, equal batch size $n_0$, and a design that is orthogonal on
the active coordinates. In this setting, recursively transmitting the
penalized center eventually sets every coordinate in
$\operatorname{supp}(\beta^\star)$ to zero. Appendix~\ref{app:direct-obstruction}
gives a high-dimensional construction with $n_0<p$. For every
$k\in\operatorname{supp}(\beta^\star)$, the iterates obey
\begin{equation}
 \left|\widehat w_{t,k}^{\mathrm{dir}}\right|
 =\frac{1}{t}\max\!\left\{
   (t-1)\left|\widehat w_{t-1,k}^{\mathrm{dir}}\right|
   +\left|\beta_k^\star\right|-t\lambda_t,\,0
 \right\}.
\end{equation}
Each batch contributes the fixed signal $|\beta_k^\star|$, whereas inherited
shrinkage enters through $t\lambda_t$. Hence, whenever $t\lambda_t$
eventually dominates the signal, every active coordinate reaches zero after
finitely many tasks and remains zero thereafter, even though the coefficient
is fixed and more data arrive. For example,
$\lambda_t\asymp\sqrt{\log p/(tn_0)}$ gives
$t\lambda_t\asymp\sqrt{t\log p/n_0}\to\infty$. Offline pooled Lasso instead
applies only its current threshold, which decreases with cumulative sample
size. The construction and derivation are in Appendix~\ref{app:direct-obstruction}.
\end{example}

\noindent
\begin{minipage}[t]{0.59\textwidth}
\vspace{0pt}
Proposition~\ref{thm:direct-accumulation} identifies the score discrepancy
created by direct-center transmission; Figure~\ref{fig:mechanism-noisy} shows
its statistical consequence under noisy, correlated, high-dimensional designs
($p>n_t$, Gaussian AR(1) covariates, $\sigma=1$): substantial attenuation of
the true active coefficients. By contrast, SP-CL remains aligned with
offline pooled Lasso throughout. Appendix~\ref{app:direct-obstruction} shows
that the score discrepancy is robust across the tested penalty constants,
whereas the severity of signal attenuation depends on the penalty level.
\end{minipage}\hfill
\begin{minipage}[t]{0.41\textwidth}
\vspace{0pt}
\centering
\includegraphics[width=\linewidth]{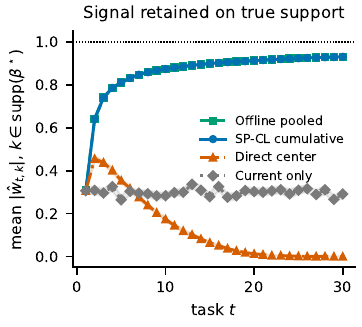}
\captionof{figure}{Signal retained on the true active support ($p=200$, $s=10$,
$n_t=40$, $T=30$; 50 streams).}
\label{fig:mechanism-noisy}
\end{minipage}
\par\medskip

\paragraph{Persistent and task-local first-order information.}
Proposition~\ref{thm:direct-accumulation} suggests separating estimation from
transmission: a regularizer may be useful for the current estimate without
belonging to the historical objective that the next state is intended to
reconstruct. In the present Lasso setting, the $\ell_1$ penalty remains
essential for sparse estimation, but its KKT contribution is not part of the
historical data loss. SP-CL therefore retains the $\ell_1$ penalty in
estimation but removes its KKT contribution before transmission -- regularize
locally, transmit the historical data score. Appendix~\ref{app:realdata}
further confirms this exact-equivalence mechanism on a real chronological
climate-prediction stream.

\subsection{Score-preserving continual Lasso}
\label{sec:score-preserving-update}

Accordingly, SP-CL removes the current $\ell_1$-KKT
contribution before transmitting the state. Initialize $H_0=0$, $u_0=0$, and
$N_0=0$.

\paragraph{Cumulative sparse fit.}
Given $(H_{t-1},u_{t-1})$ satisfying $H_{t-1}u_{t-1}=G_{t-1}$, set
$N_t=N_{t-1}+n_t$ and $H_t=H_{t-1}+X_t^{\mathsf T}X_t$, and solve
\begin{equation}
  \widehat w_t\in
  \argmin_w\left\{
    \frac12\norm{Y_t-X_tw}_2^2
    +\frac12\norm{w-u_{t-1}}_{H_{t-1}}^2
    +N_t\lambda_{w,t}\norm{w}_1
  \right\}.
  \label{eq:sp-stage-one}
\end{equation}

\paragraph{Score-preserving state correction.}
Choose $z_t\in\partial\norm{\widehat w_t}_1$. The KKT condition for
\eqref{eq:sp-stage-one} gives
\[
  S_t(\widehat w_t)=G_t-H_t\widehat w_t
  =N_t\lambda_{w,t}z_t.
\]
Unlike Proposition~\ref{thm:direct-accumulation}, this expression contains no
penalty terms accumulated from earlier tasks: its only regularization
contribution is the one used for the current sparse fit.
Define
\begin{equation}
  d_t=(H_t/N_t)^+z_t,
  \qquad
  u_t=\widehat w_t+\lambda_{w,t}d_t.
  \label{eq:sp-carrier}
\end{equation}
The KKT equation also implies $z_t\in\operatorname{Col}(H_t)$, so
$H_td_t=N_tz_t$ and therefore
\[
  S_t(u_t)=G_t-H_tu_t=0.
\]
Thus, $\ell_1$ regularization acts on the estimator $\widehat w_t$, while the
transmitted quadratic state retains the historical curvature and data score
without accumulated penalty contributions. The Moore--Penrose inverse selects
the minimum-norm correction $d_t$ when $H_t$ is singular.

The full-history sparse benchmark is the offline pooled Lasso
\begin{equation}
  \widehat w_t^{\mathrm{off}}
  \in\arg\min_w\left\{
    \frac{1}{2N_t}\sum_{j=1}^t\norm{Y_j-X_jw}_2^2
    +\lambda_{w,t}\norm{w}_1
  \right\}.
  \label{eq:offline-pooled}
\end{equation}
It is unavailable to a continual learner because it requires the historical
raw batches.

\begin{theorem}[Exact pooled-Lasso equivalence]
\label{thm:exact}
For every $t$, the cumulative estimator $\widehat w_t$ in
\eqref{eq:sp-stage-one} is a minimizer of the offline pooled Lasso
\eqref{eq:offline-pooled}.
\end{theorem}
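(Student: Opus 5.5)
The argument is an induction on $t$ that rewrites the stage-one objective \eqref{eq:sp-stage-one} as $N_t$ times the offline pooled objective \eqref{eq:offline-pooled}, up to an additive constant. The inductive hypothesis is that the transmitted carrier satisfies $H_{t-1}u_{t-1}=G_{t-1}$. For $t=1$ this holds trivially, since $H_0=0$, $u_0=0$ and $G_0=0$. Given the hypothesis, Lemma~\ref{lem:quadratic-closure} applied at time $t-1$ with score carrier $u_{t-1}$ gives
\[
\frac12\norm{w-u_{t-1}}_{H_{t-1}}^2=\frac12\sum_{j=1}^{t-1}\norm{Y_j-X_jw}_2^2-C_{t-1}
\]
for every $w$. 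Adding the current-batch loss therefore shows that the objective in \eqref{eq:sp-stage-one} equals
\[
\frac12\sum_{j=1}^{t}\norm{Y_j-X_jw}_2^2+N_t\lambda_{w,t}\norm{w}_1-C_{t-1}.
\]
This is $N_t$ times the pooled objective plus a constant. The two problems have identical minimizer sets, so every $\widehat w_t$ solving \eqref{eq:sp-stage-one} minimizes \eqref{eq:offline-pooled}.

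The remaining step, and the only nontrivial one, is to propagate the invariant, i.e.\ to show $H_tu_t=G_t$ with $u_t$ defined in \eqref{eq:sp-carrier}. Differentiating the identity above, the KKT condition for \eqref{eq:sp-stage-one} reads $G_t-H_t\widehat w_t=N_t\lambda_{w,t}z_t$ for some $z_t\in\partial\norm{\widehat w_t}_1$. Here I use $H_{t-1}u_{t-1}=G_{t-1}$ to identify the gradient of the quadratic term, namely $H_{t-1}(w-u_{t-1})+X_t^{\mathsf T}(X_tw-Y_t)=H_tw-G_t$.

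The main obstacle is the range condition needed to apply the pseudoinverse. Since $G_t=\sum_j X_j^{\mathsf T}Y_j\in\operatorname{Col}(H_t)$ (because $\operatorname{Col}(X_j^{\mathsf T})\subseteq\operatorname{Col}(H_t)$), and $H_t\widehat w_t\in\operatorname{Col}(H_t)$ trivially, the KKT equation gives $N_t\lambda_{w,t}z_t\in\operatorname{Col}(H_t)$. Assuming $\lambda_{w,t}>0$, this yields $z_t\in\operatorname{Col}(H_t)$. Then $H_t(H_t/N_t)^+z_t=N_tP_{\operatorname{Col}(H_t)}z_t=N_tz_t$, so $H_td_t=N_tz_t$. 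Consequently
\[
H_tu_t=H_t\widehat w_t+\lambda_{w,t}N_tz_t=G_t,
\]
which closes the induction.

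One subtlety is that the subgradient $z_t$ must be the one certifying the KKT condition, not an arbitrary element of $\partial\norm{\widehat w_t}_1$; I will state that choice explicitly. Existence of a minimizer of \eqref{eq:sp-stage-one} follows from coercivity of the $\ell_1$ term plus a nonnegative quadratic, or equivalently from existence for the pooled Lasso.
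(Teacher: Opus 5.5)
Your proof is correct and follows the paper's argument. Both proceed by induction on the invariant $H_{t-1}u_{t-1}=G_{t-1}$, use Lemma~\ref{lem:quadratic-closure} to identify the smooth part of \eqref{eq:sp-stage-one} with the full-history squared loss, and combine the KKT equation with $z_t\in\operatorname{Col}(H_t)$ to get $H_td_t=N_tz_t$ and hence $H_tu_t=G_t$. Your explicit range argument, and your requirement that $z_t$ be the KKT-certifying subgradient rather than an arbitrary element of $\partial\norm{\widehat w_t}_1$, spell out what the paper's proof leaves implicit.
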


The proof is given in Appendix~\ref{app:method-proofs}. Hence SP-CL exactly
reproduces the pooled-Lasso objective without revisiting historical data; only
$H_tu_t=G_t$ matters, so $u_t$ represents the historical linear score.

\paragraph{Current-task correction under task shift.}
When task coefficients differ, the cumulative estimator targets a pooled
coefficient rather than the current-task coefficient. Motivated by the
two-stage correction principle in high-dimensional transfer learning
\citep{li2022transfer,tianfeng2023}, we use the current batch to sparsely
correct the pooled estimate without modifying the transmitted state:
\begin{equation}
 \widehat\Delta_t=
 \argmin_\Delta\left\{
 \frac{1}{2n_t}\norm{Y_t-X_t(\widehat w_t+\Delta)}_2^2+
 \lambda_{\Delta,t}\norm{\Delta}_1\right\},
 \qquad
 \widehat\beta_t=\widehat w_t+\widehat\Delta_t.
 \label{eq:target-correction}
\end{equation}
The predictor $\widehat\beta_t$ is used for current-task prediction but is not
written back into the state. Section~\ref{sec:theory} analyzes when this
correction improves on current-only and cumulative estimation.

\begin{algorithm}[H]
\caption{Score-preserving continual Lasso (SP-CL)}
\label{alg:spcl}
\begin{algorithmic}[1]
\Require Sequential batches $\{(X_t,Y_t)\}_{t=1}^T$ and penalties
$\{\lambda_{w,t},\lambda_{\Delta,t}\}_{t=1}^T$
\State Initialize $H_0=0$, $u_0=0$, and $N_0=0$
\For{$t=1,\ldots,T$}
  \State \textbf{Cumulative sparse fit:} Set $N_t=N_{t-1}+n_t$ and solve \eqref{eq:sp-stage-one} for $\widehat w_t$
  \State \textbf{Score-preserving state update:} Set $H_t=H_{t-1}+X_t^{\mathsf T}X_t$ and construct $u_t$ by \eqref{eq:sp-carrier}
  \State \textbf{Current-task correction (under task shift):} Solve \eqref{eq:target-correction} and output $\widehat\beta_t$
\EndFor
\end{algorithmic}
\end{algorithm}

Only $(H_t,u_t,N_t)$ is transmitted to task $t+1$; neither
$\widehat\beta_t$ nor $\widehat\Delta_t$ is written into the transmitted state.
In unrestricted designs, this requires a $p\times p$ curvature state; under a
fixed known common eigenbasis for pairwise commuting empirical task Hessians,
Corollary~\ref{cor:commuting-state} gives an exact $O(p)$-dimensional
transmitted state without changing the original $\ell_1$ penalty.

\subsection{Consequence Under a Shared Coefficient}
\label{sec:shared}

The simplest consequence of Theorem~\ref{thm:exact} arises when
$\beta_1^\star=\cdots=\beta_t^\star=\beta^\star$: task mismatch disappears and
the cumulative and current targets coincide.

\begin{corollary}[Full-history sparse estimation under a shared coefficient]
\label{cor:shared-rate}
Suppose $\beta_1^\star=\cdots=\beta_t^\star=\beta^\star$,
$\norm{\beta^\star}_0\le s$, and the pooled design and noise satisfy the
standard sub-Gaussian conditions with well-conditioned covariance. If
$N_t\gtrsim s\log p$ and $\lambda_{w,t}\asymp\sqrt{\log p/N_t}$, then, with
high probability,
\begin{equation}
  \norm{\widehat w_t-\beta^\star}_{\Sigma}^2
  \lesssim\frac{s\log p}{N_t},
  \qquad
  \norm{\widehat w_t-\beta^\star}_1
  \lesssim s\sqrt{\frac{\log p}{N_t}}.
  \label{eq:shared-rate}
\end{equation}
Let $S=\operatorname{supp}(\beta^\star)$. Under the additional
irrepresentability, beta-min, and sample-size conditions in
Appendix~\ref{app:shared-selection},
\begin{equation}
  \Pbb\!\left\{
    \operatorname{supp}(\widehat w_t)=S,\quad
    \operatorname{sgn}(\widehat w_{t,S})=
    \operatorname{sgn}(\beta_S^\star)
  \right\}\longrightarrow 1.
  \label{eq:shared-selection}
\end{equation}
\end{corollary}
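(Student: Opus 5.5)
The plan is to reduce everything to standard offline Lasso theory through Theorem~\ref{thm:exact}. By that theorem, $\widehat w_t$ is a minimizer of the pooled Lasso \eqref{eq:offline-pooled} on the stacked design $\mathbf X=(X_1^{\mathsf T},\ldots,X_t^{\mathsf T})^{\mathsf T}\in\R^{N_t\times p}$ and response $\mathbf Y$. Under a shared coefficient, $\mathbf Y=\mathbf X\beta^\star+\boldsymbol\varepsilon$ holds exactly, so no task-mismatch term appears. Every statistical claim about $\widehat w_t$ therefore becomes a claim about an ordinary Lasso with $N_t$ samples, and the continual structure plays no further role.

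\textbf{Rates.} I would run the usual deterministic argument. Start from the basic inequality for any pooled-Lasso minimizer, with $\widehat\delta=\widehat w_t-\beta^\star$:
\[
\frac{1}{2N_t}\norm{\mathbf X\widehat\delta}_2^2\le \frac{1}{N_t}\boldsymbol\varepsilon^{\mathsf T}\mathbf X\widehat\delta+\lambda_{w,t}\bigl(\norm{\beta^\star}_1-\norm{\widehat w_t}_1\bigr).
\]
Two high-probability events are needed.
\begin{enumerate}
\item The score bound $\norm{\mathbf X^{\mathsf T}\boldsymbol\varepsilon/N_t}_\infty\le\lambda_{w,t}/2$ holds with $\lambda_{w,t}\asymp\sqrt{\log p/N_t}$. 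This follows from sub-Gaussian noise, bounded column norms of the sub-Gaussian design, and a union bound over the $p$ coordinates.
\item A restricted eigenvalue condition holds for $\hat\Sigma=\mathbf X^{\mathsf T}\mathbf X/N_t$ over the cone $\{\norm{\delta_{S^c}}_1\le3\norm{\delta_S}_1\}$, namely $\delta^{\mathsf T}\hat\Sigma\delta\ge\kappa\norm{\delta}_\Sigma^2$.
\end{enumerate}
On these events, the cone inequality plus restricted eigenvalues gives $\norm{\widehat\delta}_{\Sigma}^2\lesssim s\lambda_{w,t}^2$ and $\norm{\widehat\delta}_1\lesssim s\lambda_{w,t}$. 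These are exactly \eqref{eq:shared-rate}. Since the bound holds for every minimizer, non-uniqueness is harmless.

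\textbf{Main obstacle.} The restricted eigenvalue step is the hardest. The rows are independent but not identically distributed across tasks, because the task covariances $\Sigma_j$ may differ. I would use a Rudelson--Zhou or Raskutti--Wainwright-type lower bound for independent sub-Gaussian rows. It gives, with high probability,
\[
\delta^{\mathsf T}\hat\Sigma\delta\ge c_1\,\delta^{\mathsf T}\bar\Sigma_t\delta-c_2\frac{\log p}{N_t}\norm{\delta}_1^2,
\]
where $\bar\Sigma_t=\sum_j\pi_{j,t}\Sigma_j$. The proof goes through once the sub-Gaussian constants are uniform across tasks. I would read the phrase ``well-conditioned covariance'' as the assumption that $\bar\Sigma_t$, which is the $\Sigma$ in \eqref{eq:shared-rate}, has eigenvalues bounded away from $0$ and $\infty$. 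On the cone, $\norm{\delta}_1^2\le16s\norm{\delta}_2^2$, so the condition $N_t\gtrsim s\log p$ lets the first term absorb the second.

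\textbf{Selection.} For \eqref{eq:shared-selection} I would use the primal–dual witness construction on the pooled problem.
\begin{enumerate}
\item Solve the Lasso restricted to $S$ to obtain $\check w_S$.
\item Set $\check z_S=\operatorname{sgn}(\beta^\star_S)$ and solve the KKT equations for $\check z_{S^c}$.
\item Show strict dual feasibility, $\norm{\check z_{S^c}}_\infty<1$. This uses irrepresentability $\norm{\hat\Sigma_{S^cS}\hat\Sigma_{SS}^{-1}}_\infty\le1-\gamma$, transferred from the population version by concentration using the sample-size condition, together with $\norm{\mathbf X_{S^c}^{\mathsf T}\Pi_{S}^{\perp}\boldsymbol\varepsilon/N_t}_\infty<\gamma\lambda_{w,t}/2$, where $\Pi_S^\perp$ is the projection orthogonal to the columns of $\mathbf X_S$.
\item Verify sign consistency from the bound $\norm{\check w_S-\beta^\star_S}_\infty\le\lambda_{w,t}\norm{\hat\Sigma_{SS}^{-1}}_\infty+O(\sqrt{\log s/N_t})$ combined with beta-min.
\end{enumerate}
Strict dual feasibility forces every pooled-Lasso minimizer to have support contained in $S$, and $\hat\Sigma_{SS}\succ0$ gives uniqueness. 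Hence the SP-CL iterate equals $\check w$ and has the claimed support and signs with probability tending to one.
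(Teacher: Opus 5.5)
Your proposal is correct and takes the paper's route: Theorem~\ref{thm:exact} reduces $\widehat w_t$ to an offline pooled-Lasso minimizer. From there the paper simply invokes the standard restricted-eigenvalue rate bounds and the primal--dual witness argument of \citet{wainwright2009}, which you spell out in more detail, and your reading of $\Sigma$ as $\bar\Sigma_t=\sum_j\pi_{j,t}\Sigma_j$ is a harmless generalization of the paper's common-covariance setting.
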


Proofs are in Appendices~\ref{app:method-proofs} and~\ref{app:shared-selection};
unlike Table~\ref{tab:shared_sparse_comparison}'s rate comparison,
Theorem~\ref{thm:exact} gives exact objective-level equivalence.

\begin{table}[H]
\centering
\caption{Statistical comparison in the shared-coefficient setting. Rates show
representative squared-estimation/excess-prediction scales under
well-conditioned designs; dense and sparse rows correspond to different
parameter classes and are not direct minimax comparisons.}
\label{tab:shared_sparse_comparison}
\small
\setlength{\tabcolsep}{4.5pt}
\renewcommand{\arraystretch}{1.1}
\begin{tabular}{@{}lcccc@{}}
\toprule
Estimator & Raw history & Structure & Error scale & Signed support \\
\midrule
Current-only Lasso
& No & Sparse & $\displaystyle \frac{s\log p}{n_t}$ & $\checkmark^{\ddagger}$ \\
Offline pooled Lasso
& Yes & Sparse & $\displaystyle \frac{s\log p}{N_t}$ & $\checkmark^{\ddagger}$ \\
Gen.\ $\ell_2$ CL \citep{zhao2024}
& No & Dense & $\displaystyle \frac{p}{N_t}^{\dagger}$ & -- \\
\textbf{SP-CL cumulative}
& No & Sparse & $\displaystyle \frac{s\log p}{N_t}$ & $\checkmark^{\ddagger}$ \\
\bottomrule
\end{tabular}

\vspace{2pt}
\footnotesize
$\dagger$ Representative scaling of the oracle-order generalized-$\ell_2$
error in a common, well-conditioned shared-coefficient specialization.
$\ddagger$ Under the additional irrepresentability, beta-min, and sample-size
conditions of Corollary~\ref{cor:shared-rate}, or their current-only analogue.
\end{table}

\section{Current-Task Transfer under Related Task Shift}
\label{sec:theory}

\subsection{Design condition and the related-task class}
\label{sec:oracle-regime}

We now allow task-specific coefficients and ask when related history improves
current-task estimation under the following design condition.

\begin{condition}[Design and noise]
\label{cond:design}
The rows $x_{ji}$ are independent mean-zero sub-Gaussian vectors with common
covariance $\Sigma$, whose eigenvalues lie in $[\kappa_l,\kappa_u]$ for fixed
constants $0<\kappa_l\le\kappa_u<\infty$; the errors $\varepsilon_{ji}$ are
independent mean-zero sub-Gaussian with uniformly bounded sub-Gaussian norm.
\end{condition}

For each task $t\ge2$, consider the oracle related-task class
\[
 \Theta_t(s,h)=\Big\{(\beta_1^\star,\ldots,\beta_t^\star):
 \norm{\beta_t^\star}_0\le s,\ \max_{j<t}\norm{\beta_j^\star-\beta_t^\star}_1\le h\Big\}.
\]
Here $h$ measures the largest $\ell_1$ discrepancy between the current task
and a historical task; smaller $h$ means more transferable history. We work
in an oracle related-task regime in which all historical tasks are informative
for the current target.

Under the common covariance in Condition~\ref{cond:design}, with
$\pi_{j,t}=n_j/N_t$, the cumulative population target is
$w_t^\star=\sum_{j=1}^t\pi_{j,t}\beta_j^\star$. Hence, writing
$\Delta_t^\star=\beta_t^\star-w_t^\star$, the relatedness condition gives
$\norm{\Delta_t^\star}_1\le h$.

\subsection{Current-task transfer}
\label{sec:three-estimators}

We compare the cumulative and adapted predictors with the current-only Lasso
$\widehat\beta_t^{\mathrm{cur}}$, obtained by minimizing
$(2n_t)^{-1}\norm{Y_t-X_tb}_2^2+\lambda_{\mathrm{cur},t}\norm{b}_1$ over $b$.
The cumulative predictor $\widehat w_t$ uses all historical information
through the score-preserving state, and the adapted predictor
$\widehat\beta_t=\widehat w_t+\widehat\Delta_t$ uses the current batch to
correct cumulative mismatch. Let $D_t=\norm{w_t^\star-\beta_t^\star}_\Sigma$.

\begin{theorem}[Anytime estimation under related task shift]
\label{thm:transfer}
Fix $\delta\in(0,1)$ and, for $t\ge1$, let
$L_{t,\delta}=\log\!\left(C_{\mathrm{u}}p\,t(t+1)/\delta\right)$, where
$C_{\mathrm{u}}$ is sufficiently large, and take
$\lambda_{\mathrm{cur},t},\lambda_{\Delta,t}\asymp
\sqrt{L_{t,\delta}/n_t}$ and
$\lambda_{w,t}\asymp\sqrt{L_{t,\delta}/N_t}$. Under
Condition~\ref{cond:design} and the sparsity and sample-size conditions below,
with probability at least $1-\delta$, the following statements hold
simultaneously for every $t\ge1$ for which their respective conditions hold.
For \emph{(a)} current-only fitting, suppose $\norm{\beta_t^\star}_0\le s$
and $n_t\gtrsim sL_{t,\delta}$. Then
\[
  \norm{\widehat\beta_t^{\mathrm{cur}}-\beta_t^\star}_{\Sigma}^2
  \lesssim\frac{sL_{t,\delta}}{n_t},
  \qquad
  \norm{\widehat\beta_t^{\mathrm{cur}}-\beta_t^\star}_1
  \lesssim s\sqrt{\frac{L_{t,\delta}}{n_t}}.
\]
For \emph{(b)} the cumulative generalist, suppose
$\norm{w_t^\star}_0\le s_{w,t}$ and $N_t\gtrsim s_{w,t}L_{t,\delta}$. Then
\[
  \norm{\widehat w_t-w_t^\star}_{\Sigma}^2
  \lesssim\frac{s_{w,t}L_{t,\delta}}{N_t},
  \qquad
  \norm{\widehat w_t-w_t^\star}_1
  \lesssim s_{w,t}\sqrt{\frac{L_{t,\delta}}{N_t}}.
\]
Moreover, $\risk_t(\widehat w_t)\lesssim
s_{w,t}L_{t,\delta}/N_t+D_t^2$.

For \emph{(c)} the adapted specialist, let $t\ge2$ and suppose
$(\beta_1^\star,\ldots,\beta_t^\star)\in\Theta_t(s,h)$ with
$h\sqrt{L_{t,\delta}/n_t}=o(1)$ along the underlying high-dimensional
sequence, $h\lesssim\sqrt s$,
$n_t\gtrsim L_{t,\delta}$, and $N_{t-1}\gtrsim sL_{t,\delta}$. Then
\begin{equation}
\begin{aligned}
  \norm{\widehat\beta_t-\beta_t^\star}_{\Sigma}^2
  &\lesssim\frac{sL_{t,\delta}}{N_t}
  +\left(h\sqrt{\frac{L_{t,\delta}}{n_t}}\wedge h^2\right),
  &\norm{\widehat\beta_t-\beta_t^\star}_1
  &\lesssim s\sqrt{\frac{L_{t,\delta}}{N_t}}+h.
\end{aligned}
\label{eq:oracle-rate}
\end{equation}
\end{theorem}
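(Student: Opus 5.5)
The plan is to build a single high-probability event on which deterministic Lasso arguments give all three statements at once. For each $t$ I would introduce four random quantities: the restricted-eigenvalue (RE) constants of $\widehat\Sigma_t=X_t^{\mathsf T}X_t/n_t$ and $\widehat\Sigma_{(t)}=H_t/N_t$, and the sup-norm scores $\norm{X_t^{\mathsf T}\varepsilon_t}_\infty/n_t$ and $\norm{\sum_{j\le t}X_j^{\mathsf T}\varepsilon_j}_\infty/N_t$.

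\textbf{The event.} Under Condition~\ref{cond:design}, a sub-Gaussian/Bernstein bound together with a union over $p$ coordinates controls each score. Assigning failure probability $\delta/(C t(t+1))$ at time $t$ makes the scores at most $c\sqrt{L_{t,\delta}/n_t}$ and $c\sqrt{L_{t,\delta}/N_t}$. For the RE constants I would use the standard sub-Gaussian restricted-eigenvalue results (Rudelson--Zhou / Raskutti--Wainwright--Yu). These give the lower bound
\[
v^{\mathsf T}\widehat\Sigma v\ge \tfrac{\kappa_l}{2}\norm{v}_2^2- C\tfrac{L_{t,\delta}}{n}\norm{v}_1^2
\]
for all $v$, with the same per-$t$ failure budget; the $\log(t(t+1))$ term is absorbed into $L_{t,\delta}$. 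Since $\sum_t 1/(t(t+1))=1$, the union over $t$ costs at most $\delta$, which gives the anytime statement.

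\textbf{Parts (a) and (b).} Part (a) is the textbook Lasso oracle inequality on this event. For (b), Theorem~\ref{thm:exact} says $\widehat w_t$ is the pooled Lasso, so it suffices to analyze the pooled problem. The one new point is that the pooled response is $Y_j=X_jw_t^\star+X_j(\beta_j^\star-w_t^\star)+\varepsilon_j$, so the effective noise contains the mismatch term $\sum_j X_j^{\mathsf T}X_j(\beta_j^\star-w_t^\star)/N_t$. Its expectation is $\Sigma\sum_j\pi_{j,t}(\beta_j^\star-w_t^\star)=0$ because the covariance is common. I would therefore bound its sup-norm by a sub-exponential Bernstein bound, giving $\lesssim \sqrt{L/N_t}\max_j\norm{\beta_j^\star-w_t^\star}_\Sigma$. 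This requires the mismatch norms to be bounded, which I would assume implicitly; I would add it to the event so that $\lambda_{w,t}$ still dominates. The standard cone plus RE argument then gives the rates in (b). The risk bound follows from $\risk_t(\widehat w_t)=\norm{\widehat w_t-w_t^\star+w_t^\star-\beta_t^\star}_\Sigma^2\le 2(\cdot)+2D_t^2$.

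\textbf{Part (c).} The correction step is a Lasso on the current batch with offset $\widehat w_t$ and target $\Delta^\star$, where $\Delta^\star=\beta_t^\star-\widehat w_t$ is random. Write $\widehat w_t=w_t^\star+e$, so $\Delta^\star=\Delta_t^\star-e$. Under $\Theta_t(s,h)$ the vector $\Delta_t^\star$ is not sparse, only $\ell_1$-bounded by $h$. I would use the $\ell_1$-ball (weakly sparse) Lasso analysis from the basic inequality,
\[
\tfrac12\norm{\widehat\Delta-\Delta^\star}_{\widehat\Sigma_t}^2\le \langle X_t^{\mathsf T}\varepsilon_t/n_t,\widehat\Delta-\Delta^\star\rangle+\lambda_\Delta(\norm{\Delta^\star}_1-\norm{\widehat\Delta}_1),
\]
taking $\lambda_\Delta\ge 2\norm{X_t^{\mathsf T}\varepsilon_t}_\infty/n_t$. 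This yields $\norm{\widehat\Delta-\Delta^\star}_1\lesssim\norm{\Delta^\star}_1$ and
\[
\norm{\widehat\Delta-\Delta^\star}_{\widehat\Sigma_t}^2\lesssim\lambda_\Delta\norm{\Delta^\star}_1 .
\]
The sup-norm score involves only current noise, since $\widehat w_t$ enters as a fixed offset in the design algebra. The Gram matrix is nonetheless random and dependent on $\widehat w_t$ through $X_t$. The RE lower bound above is uniform in $v$, which handles this: it converts to $\norm{\cdot}_\Sigma^2\lesssim \lambda_\Delta\norm{\Delta^\star}_1+(L/n_t)\norm{\Delta^\star}_1^2$. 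By (b) applied with $s_{w,t}$ replaced via $\norm{w_t^\star}_0$, I would bound $\norm{\Delta^\star}_1\le h+\norm{e}_1\lesssim h+s\sqrt{L/N_t}$.

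\textbf{Main obstacle.} The hardest step is that $w_t^\star$ need not be $s$-sparse. I would instead treat $\widehat w_t$ as estimating $\beta_t^\star$ in the pooled problem with an approximately sparse truth. There $\beta_t^\star$ is $s$-sparse and the bias term has $\ell_1$ norm at most $h$, so a weakly-sparse oracle inequality gives $\norm{e}_\Sigma^2\lesssim sL/N_t+\lambda_w h$ and $\norm{e}_1\lesssim s\sqrt{L/N_t}+h$, using $N_{t-1}\gtrsim sL$. Combining, with $\beta_t-\beta_t^\star=\widehat\Delta-\Delta^\star$:
\[
\norm{\widehat\beta_t-\beta_t^\star}_1\lesssim s\sqrt{L/N_t}+h .
\]
For the $\Sigma$-norm, the crude bound $\sqrt{L/n_t}\,(h+s\sqrt{L/N_t})$ contains a cross term $s L/\sqrt{n_tN_t}$ that is too large. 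To avoid it I would instead decompose $\widehat\beta_t-\beta_t^\star=(\widehat\Delta-\Delta_t^\star+e)+\ldots$ and run the correction Lasso with comparison point $\Delta_t^\star$ only. The $e$-part then enters as a prediction-norm term $\norm{X_te}^2/n_t\lesssim\norm{e}_\Sigma^2+(L/n_t)\norm{e}_1^2$, which is $\lesssim sL/N_t$ after using $h\lesssim\sqrt s$ and $h\sqrt{L/n_t}=o(1)$. The $\Delta_t^\star$-part is either $\lambda_\Delta h$, or $h^2$ by choosing $\widehat\Delta=0$ as comparator, which gives the minimum $h\sqrt{L/n_t}\wedge h^2$. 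Getting this cross-term control right is the step I expect to be delicate.
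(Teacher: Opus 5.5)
\textbf{Comparison of routes.} Your architecture for the event, part (a), part (b) and the union over $t$ matches the paper: per-$t$ score and RE events with failure budget $\propto\delta/[t(t+1)]$, the textbook Lasso bound, and Theorem~\ref{thm:exact} to reduce (b) to the pooled Lasso. For (c) the routes differ. The paper does no direct analysis. It uses Theorem~\ref{thm:exact} to identify $(\widehat w_t,\widehat\Delta_t)$ exactly with Tian--Feng's oracle A-Trans-GLM output under $\psi(u)=u^2/2$ (target $\mathcal D_t$, sources $\mathcal D_j$, $j<t$). It notes that their covariance-compatibility assumption is automatic under a common $\Sigma$, and imports their two-stage bound, stated under $n_0\ge C\log p$, at confidence level $\xi_t$. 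Your direct route is sound up to the offset basic inequality. Comparing $\widehat\Delta_t$ with $\Delta_t^\star$ gives, deterministically on the score event,
\[
  \frac{1}{n_t}\norm{X_t(\widehat\beta_t-\beta_t^\star)}_2^2
  \le\frac{1}{n_t}\norm{X_te}_2^2+3\lambda_{\Delta,t}h,
  \qquad
  \norm{\widehat\Delta_t}_1\le 3h+\frac{\norm{X_te}_2^2}{\lambda_{\Delta,t}n_t},
\]
with $e=\widehat w_t-w_t^\star$. So everything hinges on the current-batch in-sample error $n_t^{-1}\norm{X_te}_2^2$, and that is where the argument breaks.

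\textbf{The gap.} Because $e$ depends on $X_t$, you bound this term by the uniform upper-RE inequality, $n_t^{-1}\norm{X_te}_2^2\lesssim\norm{e}_2^2+(L_{t,\delta}/n_t)\norm{e}_1^2$. With $\norm{e}_1\asymp s\sqrt{L_{t,\delta}/N_t}$, the second term equals $\frac{sL_{t,\delta}}{N_t}\cdot\frac{sL_{t,\delta}}{n_t}$. This is $\lesssim sL_{t,\delta}/N_t$ only if $n_t\gtrsim sL_{t,\delta}$. The facts you invoke, $h\lesssim\sqrt s$ and $h\sqrt{L_{t,\delta}/n_t}=o(1)$, control only the $h$-part of $\norm{e}_1$. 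Part (c) assumes merely $n_t\gtrsim L_{t,\delta}$. At $n_t\asymp L_{t,\delta}$ the largest $s$-sparse eigenvalue of $X_t^{\mathsf T}X_t/n_t$ can be of order $sL_{t,\delta}/n_t\asymp s$, so the uniform bound loses a factor $s$ and yields $s^2L_{t,\delta}/N_t$.

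As written, then, you prove (c) only under $n_t\gtrsim sL_{t,\delta}$, the same requirement as current-only fitting in (a). That forfeits exactly the small-current-batch regime the specialist result is meant to cover. Closing the gap needs either a finer treatment of the weak dependence of $e$ on $X_t$ (for example, a leave-current-batch-out stability argument) or, as in the paper, the offline debiasing-step analysis.

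\textbf{Minor points.} Your claim $\norm{e}_\Sigma^2\lesssim sL/N_t$ also drops the $\lambda_{w,t}h$ term. That omission is harmless, since $h\sqrt{L/N_t}\lesssim sL/N_t+(h\sqrt{L/n_t}\wedge h^2)$. Separately, your explicit treatment in (b) of the mismatch noise $N_t^{-1}\sum_jX_j^{\mathsf T}X_j(\beta_j^\star-w_t^\star)$ is more careful than the paper, which simply cites the standard Lasso bound. However, the bounded-mismatch condition you add is not among the stated hypotheses. In (c) it is not automatic, because $\max_j\norm{\beta_j^\star-w_t^\star}_2\le 2h$ with $h$ allowed to grow like $\sqrt s$.
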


For polynomially growing $t$ and $\delta^{-1}$,
$L_{t,\delta}\asymp\log p$; stronger beta-min and support-alignment conditions
yield signed-support recovery (Appendix~\ref{app:selection}). By
Theorem~\ref{thm:exact}, the pooled first stage is exactly its offline
counterpart, so the current correction inherits the offline two-stage rate
without revisiting history; Appendix~\ref{app:transfer-proof} gives the full
constant-level statement.

\paragraph{Rate interpretation.}
Writing $q_t=\sqrt{L_{t,\delta}/n_t}$, current-only fitting has scale $sq_t^2$,
whereas the specialist has scale $sL_{t,\delta}/N_t+(hq_t\wedge h^2)$. More
history reduces the first term, while larger task discrepancy increases the
second. Under common covariance, the generalist instead pays direct mismatch
$D_t^2\lesssim h^2$, so correction is most useful when pooling reduces variance
but introduces an estimable sparse mismatch.

\section{Continual Evaluation under Task Heterogeneity}
\label{sec:metrics}

We now compare the cumulative generalist $b_t^{\mathrm G}=\widehat w_t$ and
current-adapted specialist $b_t^{\mathrm S}=\widehat\beta_t$ (superscripts
$\mathrm G$/$\mathrm S$ throughout) through Overall, FWT, and BWT under task
heterogeneity.

\paragraph{Task conflict.}
Under the common covariance, define the unavoidable single-head conflict
$\mathsf{TC}_t:=\sum_{j=1}^t\pi_{j,t}
\norm{\beta_j^\star-w_t^\star}_{\Sigma}^2$. Then, for every $b\in\R^p$,
\begin{equation}
  \overline\mse_t(b)
  =
  \sum_{j=1}^t\pi_{j,t}\mse_j(\beta_j^\star)
  +\mathsf{TC}_t+\norm{b-w_t^\star}_{\Sigma}^2.
  \label{eq:conflict-decomposition}
\end{equation}
Thus $\mathsf{TC}_t$ is the unavoidable loss from representing heterogeneous
tasks by one coefficient, while the final term is the avoidable deviation from
the best cumulative single head. The analogous historical decomposition is
centered at $w_{t-1}^\star$ with conflict $\mathsf{TC}_{t-1}$.
Proposition~\ref{prop:single-head} in Appendix~\ref{app:single-head-conflict}
gives a two-task lower bound.

\refstepcounter{proposition}
\noindent\textbf{Proposition~\theproposition\ (Generalist--specialist metric
relations).}\label{prop:metric-decomposition}
\emph{Let $\overline\mse_{<t}(b)$ denote the weighted average over tasks $j<t$;
define the specialist's historical cost
$\mathrm{BC}_t:=\overline\mse_{<t}(\widehat\beta_t)-
\overline\mse_{<t}(\widehat w_t)$ and current gain
$\mathrm{CG}_t:=\mse_t(\widehat w_t)-\mse_t(\widehat\beta_t)$.} Then, for the
generalist and specialist,
\textup{(a)} $\mathrm{FWT}_t^{\mathrm S}-\mathrm{FWT}_t^{\mathrm G}
=(t-1)^{-1}\sum_{k=2}^t\mathrm{CG}_k$ and
\textup{(b)} $\overline\mse_t(\widehat\beta_t)-\overline\mse_t(\widehat w_t)
=N_{t-1}\mathrm{BC}_t/N_t-n_t\mathrm{CG}_t/N_t$. Under equal batch sizes, for
any deployed sequence $\{\widehat b_j\}$,
\textup{(c)} $\mathrm{BWT}_t(\widehat b)
=(t-1)^{-1}\sum_{j<t}\risk_j(\widehat b_j)-\mathsf{TC}_{t-1}
-\norm{\widehat b_t-w_{t-1}^\star}_{\Sigma}^2$.
The routine proof is given in Appendix~\ref{app:metric-proof}.

Part~(a) links current-task correction directly to the specialist's FWT gain,
while part~(b) expresses Overall performance as a current-gain versus
historical-cost trade-off. Part~(c) additionally isolates task conflict: even
with perfect historical information, $\widehat b_j=\beta_j^\star$ for $j<t$
and $\widehat b_t=w_{t-1}^\star$ give
$\mathrm{BWT}_t(\widehat b)=-\mathsf{TC}_{t-1}$. Thus old-task degradation need
not reflect loss of transmitted information; Appendix~\ref{app:partial-correction}
gives a budgeted interpolation between the two outputs.
\section{Experiments}
\label{sec:experiments}

\paragraph{Setup.} We evaluate sparse estimation, task-shift transfer, and
continual-learning performance under model~\eqref{eq:model} with Gaussian
AR(1) covariates $\Sigma_{k\ell}=0.5^{|k-\ell|}$, $\sigma=1$, and sparse
coefficients. We report analytic population risks over 50 streams; tuning,
solver, and sensitivity details are in Appendix~B.

\paragraph{Sparse high-dimensional estimation.}
We first consider a shared $s=10$ sparse coefficient with $p=1000$,
$n_t=40$, and $T=20$, so that $N_T=800<p$ throughout the stream.
Figure~\ref{fig:highdim-sparse} compares sparse and dense estimators. At the
final task, SP-CL attains excess risk $0.115$, compared with $2.92$ for
pilot-tuned pooled Ridge, while using about $20$ effective coordinates rather
than roughly $950$--$990$. In this globally underdetermined regime, SP-CL retains the statistical benefit of pooling while the dense pooled estimator remains substantially less accurate.

\begin{figure}[H]
  \centering
  \includegraphics[width=0.86\textwidth]{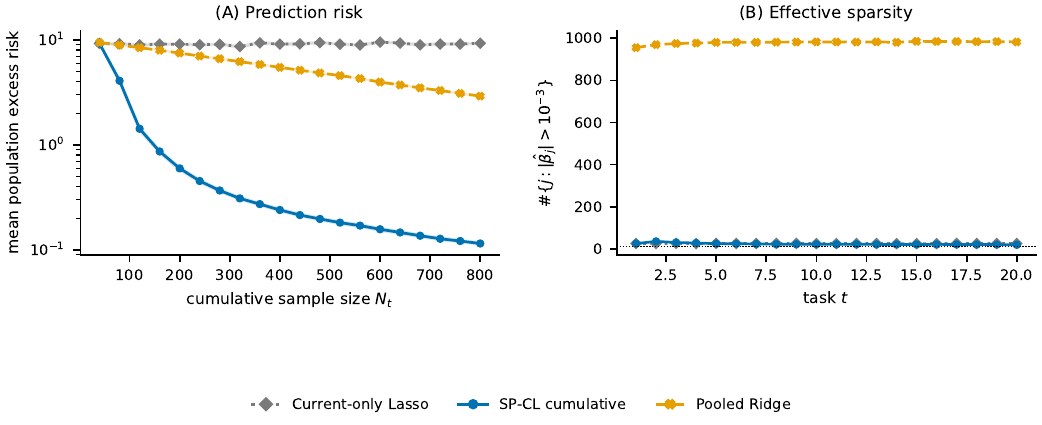}
  \caption{Sparse efficiency ($p=1000$, $s=10$, $n_t=40$, $T=20$,
  $N_T=800<p$; 50 streams). Excess risk and support size; pilot-tuned Ridge
  versus SP-CL (pooled Lasso matches SP-CL).}
  \label{fig:highdim-sparse}
\end{figure}
\FloatBarrier

\paragraph{Transfer under task shift.}
We next vary the task mismatch $h$ and the amount of historical information
$m=N_{t-1}/n_t$ under a fixed-support coefficient path
(Appendix~\ref{app:task-shift-simulation}). Figure~\ref{fig:transfer-geometry}
reports the gain from pooling, the additional gain from current-task
correction, and their total gain over current-only fitting. Pooling is
beneficial for small $h$; at intermediate mismatch, correction can rescue
negative transfer from pooling; and with severe shift and little history, even
the corrected estimator can underperform current-only fitting. The transition across \(h\) and \(m\) reflects the two terms in Theorem~\ref{thm:transfer}: more history reduces estimation error, whereas larger task mismatch increases the correction cost.

\begin{figure}[!htbp]
  \centering
  \includegraphics[width=0.90\textwidth]{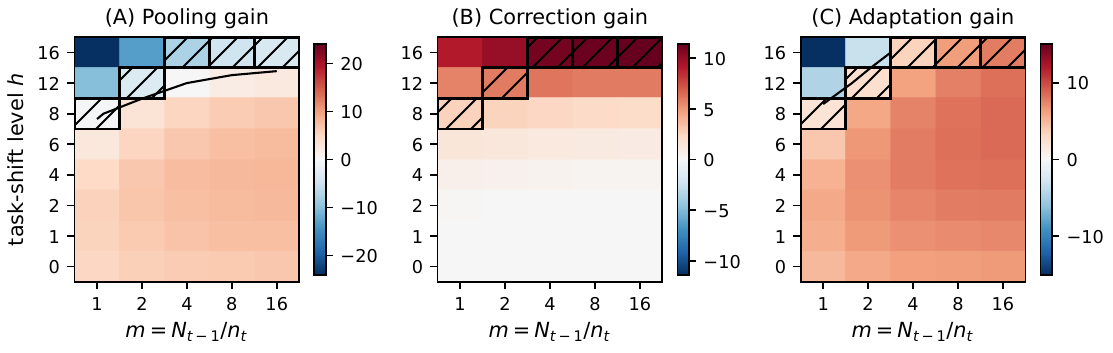}
  \caption{Transfer under task shift (50 streams). Mean gains as task mismatch
  $h$ and history ratio $m=N_{t-1}/n_t$ vary:
  $G_{\mathrm{pool}}=\risk_t(\widehat\beta_t^{\mathrm{cur}})-\risk_t(\widehat w_t)$,
  $G_{\mathrm{corr}}=\risk_t(\widehat w_t)-\risk_t(\widehat\beta_t)$, and
  $G_{\mathrm{adapt}}=G_{\mathrm{pool}}+G_{\mathrm{corr}}$; positive values
  $=$ improvement and hatching $=$ correction rescue
  (Appendix~\ref{app:task-shift-simulation}).}
  \label{fig:transfer-geometry}
\end{figure}
\FloatBarrier

\paragraph{Continual-learning performance.}
We fix $T=20$ and vary $h\in\{0,4,8,12\}$. Figure~\ref{fig:cl-metrics}
compares the cumulative generalist $\widehat w_t$ with the current-adapted
specialist $\widehat\beta_t$ on final-task Overall, FWT, and BWT. At $h=12$,
the specialist's forward transfer exceeds the generalist's
($\mathrm{FWT}^{\mathrm S}=8.23$ vs.\ $\mathrm{FWT}^{\mathrm G}=6.45$), while
its backward transfer falls well below the generalist's
($\mathrm{BWT}^{\mathrm S}=-3.23$ vs.\ $\mathrm{BWT}^{\mathrm G}=+0.98$);
Overall favors the generalist throughout. These results expose the
generalist--specialist trade-off: increasing specialization improves forward
transfer but incurs a growing historical cost, as decomposed in
Proposition~\ref{prop:metric-decomposition}.

\begin{figure}[!htbp]
  \centering
  \includegraphics[width=\textwidth]{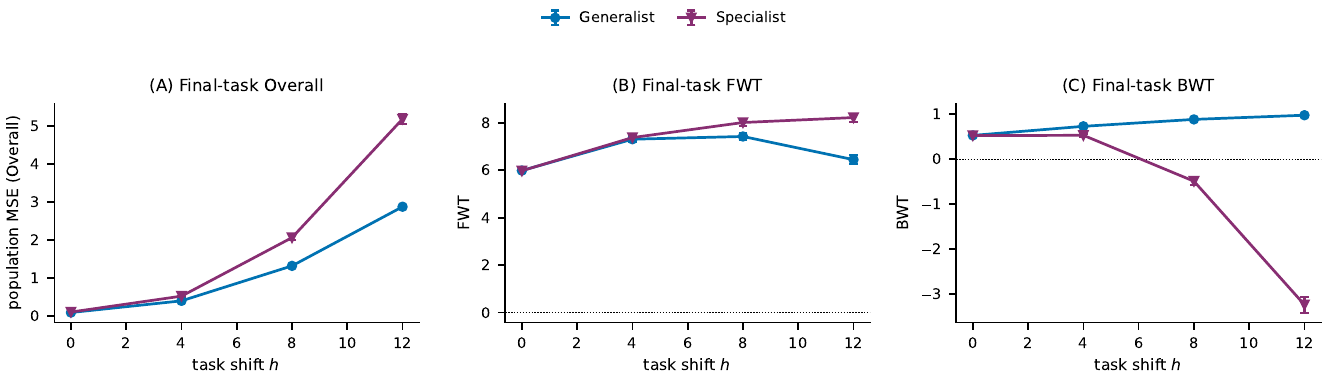}
  \caption{Continual performance under task heterogeneity ($p=200$, $s=10$,
  $n_t=40$, $T=20$; 50 streams). Final-task Overall, FWT, and BWT for the
  cumulative generalist and current-adapted specialist as $h$ varies; error
  bars show one standard error.}
  \label{fig:cl-metrics}
\end{figure}
\FloatBarrier

\section{Conclusion}

We show that recursively transmitting an $\ell_1$-penalized estimate propagates
past shrinkage, and develop SP-CL to separate sparse estimation from state
transmission. Its cumulative estimator exactly matches offline pooled Lasso,
while a current-task correction provides statistically efficient adaptation
under task shift. Overall/FWT/BWT decompositions further separate information
retention, task conflict, and specialization. SP-CL therefore treats regularization used for estimation and information retained for future tasks as distinct objects.

\section*{Reproducibility statement}
The algorithm, assumptions, and complete proofs are provided in the main paper and appendices. The supplementary ZIP contains anonymized core implementation, fixed seeds, and scripts/configurations for reproducing the main experimental results. A cleaned and documented version of the full codebase will be released publicly after the review process.

\section*{AI use statement}
Generative AI tools were used to help develop the conceptual framework,
formulate and organize mathematical claims, design the initial experimental
protocol, and draft and edit this manuscript.  The authors manually reviewed
all AI-assisted content, checked the cited literature against primary sources,
and take responsibility for the final text, mathematical claims, and empirical
artifacts.

\bibliography{references}
\bibliographystyle{iclr2027_conference}

\appendix
\section{Proofs for Score-Preserving Continual Lasso}
\label{app:method-proofs}

\subsection{Quadratic closure}

\begin{proof}[Proof of Lemma~\ref{lem:quadratic-closure}]
Let $\widetilde X_t=(X_1^{\mathsf T},\ldots,X_t^{\mathsf T})^{\mathsf T}$
and $\widetilde Y_t=(Y_1^{\mathsf T},\ldots,Y_t^{\mathsf T})^{\mathsf T}$.
Then $H_t=\widetilde X_t^{\mathsf T}\widetilde X_t$ and
$G_t=\widetilde X_t^{\mathsf T}\widetilde Y_t$. Since
$\operatorname{Col}(\widetilde X_t^{\mathsf T}\widetilde X_t)=
\operatorname{Col}(\widetilde X_t^{\mathsf T})$, we have
$G_t\in\operatorname{Col}(H_t)$, so there exists $u_t$ satisfying
$H_tu_t=G_t$. Expanding the historical squared loss gives
$\frac12 w^{\mathsf T}H_tw-w^{\mathsf T}G_t+C_t'$. Completing the square
then yields \eqref{eq:quadratic-closure}.
\end{proof}

\subsection{Penalty residual under direct transmission of the penalized center}

\begin{proof}[Proof of Proposition~\ref{thm:direct-accumulation}]
The KKT condition for \eqref{eq:direct-stage} gives
\[
  H_t\widehat w_t^{\mathrm{dir}}
  =X_t^{\mathsf T}Y_t
   +H_{t-1}\widehat w_{t-1}^{\mathrm{dir}}
   -N_t\lambda_tz_t^{\mathrm{dir}}.
\]
Subtracting this identity from $G_t=G_{t-1}+X_t^{\mathsf T}Y_t$ yields
$D_t=D_{t-1}+N_t\lambda_tz_t^{\mathrm{dir}}$ for
$D_t=G_t-H_t\widehat w_t^{\mathrm{dir}}$. Iterating proves
\eqref{eq:direct-accumulation}.
\end{proof}

\subsection{Exact pooled-Lasso equivalence}

\begin{proof}[Proof of Theorem~\ref{thm:exact}]
The KKT equation for \eqref{eq:sp-stage-one} and the induction hypothesis
$H_{t-1}u_{t-1}=G_{t-1}$ give
\[
  H_t\widehat w_t=G_t-N_t\lambda_{w,t}z_t.
\]
Because $(H_t/N_t)d_t=z_t$ on $\operatorname{Col}(H_t)$,
$H_t(\lambda_{w,t}d_t)=N_t\lambda_{w,t}z_t$, proving
$H_tu_t=G_t$. Lemma~\ref{lem:quadratic-closure} then shows that the smooth
part of \eqref{eq:sp-stage-one} is, up to a constant, the full-history squared
loss. Dividing by $N_t$ yields \eqref{eq:offline-pooled}.
\end{proof}

\subsection{Exact compressed state under commuting empirical task Hessians}
\label{app:commuting-state}

\begin{corollary}[Linear-dimensional transmitted state under commuting empirical
task Hessians]
\label{cor:commuting-state}
For each batch, let
$\widehat\Sigma_j=n_j^{-1}X_j^{\mathsf T}X_j$. Suppose that the symmetric
matrices $\{\widehat\Sigma_j\}_{j\ge1}$ commute pairwise and have a fixed
known common orthogonal eigenbasis $U$, so that
\[
  \widehat\Sigma_j=U\Gamma_jU^{\mathsf T},
  \qquad
  \Gamma_j\ \text{diagonal}.
\]
Set $D_0=0$, $a_0=0$, and, at task $t$, update
\[
  D_t=D_{t-1}+n_t\Gamma_t.
\]
The cumulative fit and score-preserving update can be computed as
\begin{align}
  \widehat w_t\in\argmin_w\bigg\{
    &\frac12\norm{Y_t-X_tw}_2^2
    +\frac12(U^{\mathsf T}w-a_{t-1})^{\mathsf T}
      D_{t-1}(U^{\mathsf T}w-a_{t-1}) \notag\\
    &\hspace{35mm}+N_t\lambda_{w,t}\norm{w}_1
  \bigg\},                                                   \label{eq:compressed-stage}\\
  a_t&=U^{\mathsf T}\widehat w_t+
  N_t\lambda_{w,t}D_t^+U^{\mathsf T}z_t,\qquad
  z_t\in\partial\norm{\widehat w_t}_1.                       \label{eq:compressed-carrier}
\end{align}
For every $t$, $D_ta_t=U^{\mathsf T}G_t$, and
\eqref{eq:compressed-stage} has the same minimizer set as the full-state
SP-CL update and the offline pooled Lasso in \eqref{eq:offline-pooled}.
Thus $(D_t,a_t,N_t)$ is a $2p+1$-dimensional transmitted state. The
$\ell_1$ penalty remains $\norm{w}_1$ in the original parameterization.
This is an $O(p)$ dynamic-state representation conditional on the shared basis
$U$; if a dense $U$ must also be stored, total persistent storage remains
$O(p^2)$.
\end{corollary}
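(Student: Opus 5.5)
The plan is an induction on $t$ with invariant $D_ta_t=U^{\mathsf T}G_t$, i.e.\ $U D_t a_t = G_t$. The first step is to identify the compressed curvature with the full one. Since $X_j^{\mathsf T}X_j=n_j\widehat\Sigma_j=U(n_j\Gamma_j)U^{\mathsf T}$, summing gives $H_t=UD_tU^{\mathsf T}$, and this holds for every $t$ by the update $D_t=D_{t-1}+n_t\Gamma_t$. Next define $u_{t-1}:=Ua_{t-1}$. Then
\[
\norm{w-u_{t-1}}_{H_{t-1}}^2=(U^{\mathsf T}w-a_{t-1})^{\mathsf T}D_{t-1}(U^{\mathsf T}w-a_{t-1}).
\]
Under the invariant at $t-1$, $H_{t-1}u_{t-1}=UD_{t-1}a_{t-1}=G_{t-1}$. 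The base case is $D_0=0$, $a_0=0$, $G_0=0$. So \eqref{eq:compressed-stage} is literally \eqref{eq:sp-stage-one} with a valid score carrier. By Lemma~\ref{lem:quadratic-closure} its smooth part equals the full-history squared loss up to a constant, and the argument of Theorem~\ref{thm:exact} then gives that its minimizer set equals that of \eqref{eq:offline-pooled}. Two objectives differing by a constant have identical minimizer sets, so this also covers the full-state SP-CL update.

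The next step propagates the invariant. The KKT condition for \eqref{eq:compressed-stage} is $H_t\widehat w_t=G_t-N_t\lambda_{w,t}z_t$, exactly as in the proof of Theorem~\ref{thm:exact}. This equation shows that $z_t\in\operatorname{Col}(H_t)$. Multiplying by $U^{\mathsf T}$ gives $D_tU^{\mathsf T}\widehat w_t=U^{\mathsf T}G_t-N_t\lambda_{w,t}U^{\mathsf T}z_t$. Because $D_t$ is diagonal and $U^{\mathsf T}z_t\in\operatorname{Col}(D_t)$ (as $\operatorname{Col}(H_t)=U\operatorname{Col}(D_t)$), we have $D_tD_t^+U^{\mathsf T}z_t=U^{\mathsf T}z_t$. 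Therefore
\[
D_ta_t=D_tU^{\mathsf T}\widehat w_t+N_t\lambda_{w,t}U^{\mathsf T}z_t=U^{\mathsf T}G_t,
\]
which closes the induction.

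As an optional consistency check, $Ua_t$ coincides with the full-state carrier $u_t$ in \eqref{eq:sp-carrier}. Using $(H_t/N_t)^+=N_tUD_t^+U^{\mathsf T}$, we get $\lambda_{w,t}d_t=N_t\lambda_{w,t}UD_t^+U^{\mathsf T}z_t$. Hence the two states agree exactly, not merely through the normal equation, provided the same subgradient $z_t$ is chosen.

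The state-dimension claim follows from $D_t$ being diagonal, which contributes $p$ numbers, together with $a_t\in\R^p$ and $N_t$. The $\ell_1$ penalty is untouched because only the quadratic term is rewritten in the rotated coordinates. The main obstacle is the pseudoinverse step: the argument needs $U^{\mathsf T}z_t$ to lie in $\operatorname{Col}(D_t)$, and this is justified through the KKT equation, since both $G_t$ and $H_t\widehat w_t$ lie in $\operatorname{Col}(H_t)$.
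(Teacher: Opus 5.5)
Your proposal is correct and follows the paper's proof essentially step for step: you identify $H_t=UD_tU^{\mathsf T}$, induct on $D_ta_t=U^{\mathsf T}G_t$, reduce \eqref{eq:compressed-stage} to \eqref{eq:sp-stage-one}, and close the induction by deducing $U^{\mathsf T}z_t\in\operatorname{Col}(D_t)$ from $z_t\in\operatorname{Col}(H_t)$ in the KKT equation. Your extra check that $Ua_t$ equals the full-state carrier $u_t$ when the same subgradient is used, via $(H_t/N_t)^+=N_tUD_t^+U^{\mathsf T}$, is a correct refinement that the paper does not state.
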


\begin{proof}
Because the empirical task Hessians are symmetric and commute, they are
simultaneously diagonalizable, giving the displayed representation. Let
$g_t=U^{\mathsf T}G_t$. Then
\[
  H_t=\sum_{j\le t}X_j^{\mathsf T}X_j=UD_tU^{\mathsf T},
  \qquad
  g_t=g_{t-1}+U^{\mathsf T}X_t^{\mathsf T}Y_t.
\]
We prove $D_ta_t=g_t$ by induction. It holds at $t=0$. If it holds at
$t-1$, then the quadratic historical term in \eqref{eq:compressed-stage}
equals, up to a constant,
\[
  \frac12w^{\mathsf T}UD_{t-1}U^{\mathsf T}w
  -w^{\mathsf T}UD_{t-1}a_{t-1}
  =
  \frac12w^{\mathsf T}H_{t-1}w-w^{\mathsf T}G_{t-1}.
\]
Hence \eqref{eq:compressed-stage} is exactly
\eqref{eq:sp-stage-one}. Its KKT equation, after left multiplication by
$U^{\mathsf T}$, is
\[
  D_tU^{\mathsf T}\widehat w_t
  =g_t-N_t\lambda_{w,t}U^{\mathsf T}z_t.
\]
As in the proof of Theorem~\ref{thm:exact},
$z_t\in\operatorname{Col}(H_t)$; consequently
$U^{\mathsf T}z_t\in\operatorname{Col}(D_t)$. Applying
\eqref{eq:compressed-carrier} therefore gives
\[
  D_ta_t
  =D_tU^{\mathsf T}\widehat w_t+
    N_t\lambda_{w,t}D_tD_t^+U^{\mathsf T}z_t
  =g_t.
\]
The induction closes, so the reduced quadratic term is identical to the
full-state one at every task. Theorem~\ref{thm:exact} then yields the stated
offline pooled-Lasso equivalence.
\end{proof}

\subsection{Proof of the estimation claim in Corollary~\ref{cor:shared-rate}}

\begin{proof}
Theorem~\ref{thm:exact} identifies $\widehat w_t$ with an offline pooled
Lasso minimizer targeting the $s$-sparse $\beta^\star$. The stated result is
therefore the standard pooled Lasso bound under the displayed design, noise,
and sample-size conditions, including its usual prediction and $\ell_1$ error
bounds.
\end{proof}

\section{Supporting results for direct transmission of the penalized center}
\label{app:direct-obstruction}

\subsection{Persistent-sign lower bound}

This appendix gives detailed consequences of the penalty residual in
Proposition~\ref{thm:direct-accumulation}, including the underdetermined
construction used in Example~\ref{ex:repeated-shrinkage}.
Suppose $n_j=n_0$ and $\beta_j^\star=\beta^\star$ for all $j$, write
$\lambda_j=\lambda_{w,j}$, and define
\[
  q_j=\frac{1}{n_0}X_j^{\mathsf T}Y_j,\qquad
  \xi_j=q_j-\frac{1}{n_0}X_j^{\mathsf T}X_j\beta^\star,\qquad
  \xi_{1:t}=\sum_{j=1}^t\xi_j.
\]
Then $G_t=H_t\beta^\star+n_0\xi_{1:t}$. Let
$S=\operatorname{supp}(\beta^\star)$, $s=|S|$, and
$\overline H_t=H_t/(tn_0)$.

\begin{proposition}[Persistent-sign lower bound]
\label{prop:persistent-sign}
Suppose that, through time $t$,
\[
  \operatorname{supp}(\widehat w_j^{\mathrm{dir}})=S,
  \qquad
  z_{j,S}^{\mathrm{dir}}=\operatorname{sgn}(\beta_S^\star),
  \qquad 1\le j\le t.
\]
Then
\begin{equation}
  \overline H_{t,SS}
  (\widehat w_{t,S}^{\mathrm{dir}}-\beta_S^\star)
  =
  \frac{1}{t}\xi_{1:t,S}
  -L_t\operatorname{sgn}(\beta_S^\star),
  \qquad
  L_t=\frac{1}{t}\sum_{j=1}^t j\lambda_j.
  \label{eq:persistent-sign-identity}
\end{equation}
Consequently, if $\overline H_{t,SS}\succ0$ and
$\lambda_{\max}(\overline H_{t,SS})\le C_{\max}$, then
\begin{equation}
  \norm{\widehat w_t^{\mathrm{dir}}-\beta^\star}_2
  \ge
  \frac{
    \left[
      \sqrt{s}L_t-\norm{\xi_{1:t,S}/t}_2
    \right]_+
  }{C_{\max}}.
  \label{eq:persistent-sign-lower-bound}
\end{equation}
\end{proposition}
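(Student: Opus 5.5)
The argument rests on Proposition~\ref{thm:direct-accumulation}, restricted to the support $S$. With $n_j=n_0$ we have $N_j=jn_0$, so the proposition gives
\[
  G_t-H_t\widehat w_t^{\mathrm{dir}}=n_0\sum_{j=1}^t j\lambda_j z_j^{\mathrm{dir}}.
\]
Substituting $G_t=H_t\beta^\star+n_0\xi_{1:t}$, which follows from the definitions of $q_j$ and $\xi_j$ by summing $X_j^{\mathsf T}Y_j=X_j^{\mathsf T}X_j\beta^\star+n_0\xi_j$, turns this into
\[
  H_t(\widehat w_t^{\mathrm{dir}}-\beta^\star)=n_0\xi_{1:t}-n_0\sum_{j=1}^t j\lambda_j z_j^{\mathrm{dir}}.
\]

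Next I restrict to the rows in $S$. Since $\operatorname{supp}(\widehat w_t^{\mathrm{dir}})=S=\operatorname{supp}(\beta^\star)$, the difference $\widehat w_t^{\mathrm{dir}}-\beta^\star$ vanishes off $S$. Hence the $S$-rows of the left side are exactly $H_{t,SS}(\widehat w_{t,S}^{\mathrm{dir}}-\beta_S^\star)$.

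By hypothesis, $z_{j,S}^{\mathrm{dir}}=\operatorname{sgn}(\beta_S^\star)$ for every $j\le t$. The sign vector therefore factors out of the sum, giving $n_0\bigl(\sum_j j\lambda_j\bigr)\operatorname{sgn}(\beta_S^\star)=n_0tL_t\operatorname{sgn}(\beta_S^\star)$. Dividing by $tn_0$ and using $\overline H_t=H_t/(tn_0)$ yields \eqref{eq:persistent-sign-identity}.

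For the lower bound \eqref{eq:persistent-sign-lower-bound}, I take Euclidean norms in the identity. The left side is at most $\lambda_{\max}(\overline H_{t,SS})\norm{\widehat w_{t,S}^{\mathrm{dir}}-\beta_S^\star}_2\le C_{\max}\norm{\widehat w_t^{\mathrm{dir}}-\beta^\star}_2$, where the last step again uses that the error vanishes off $S$. On the right side, the reverse triangle inequality together with $\norm{\operatorname{sgn}(\beta_S^\star)}_2=\sqrt s$ (every coordinate in $S$ is nonzero) gives a lower bound of $\sqrt s L_t-\norm{\xi_{1:t,S}/t}_2$. Because the left side is nonnegative, I may replace this by its positive part, and dividing by $C_{\max}$ finishes the bound. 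The hypothesis $\overline H_{t,SS}\succ0$ is not needed for the inequality itself; it only ensures the identity determines the error uniquely.

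The only delicate point is the bookkeeping in the restriction step: the row restriction of $H_t(\widehat w_t^{\mathrm{dir}}-\beta^\star)$ equals $H_{t,SS}$ times the restricted error only because the error is zero off $S$. This is where both the support hypothesis on $\widehat w_t^{\mathrm{dir}}$ and $S=\operatorname{supp}(\beta^\star)$ are used. Off-support subgradient coordinates never enter, since only $S$-rows are kept. Everything else is direct algebra.
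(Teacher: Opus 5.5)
Your proof is correct and follows the paper's route. You apply Proposition~\ref{thm:direct-accumulation} with $N_j=jn_0$, substitute $G_t=H_t\beta^\star+n_0\xi_{1:t}$, restrict to the rows in $S$ (valid because the error vanishes off $S$), factor out the common sign vector, divide by $tn_0$, and conclude with the reverse triangle inequality and the spectral bound on $\overline H_{t,SS}$. Your write-up makes explicit the bookkeeping the paper leaves implicit, including the use of $L_t\ge0$ (nonnegative penalties) in writing $\norm{L_t\operatorname{sgn}(\beta_S^\star)}_2=\sqrt s\,L_t$.
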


\begin{proof}
Divide \eqref{eq:direct-accumulation} by $tn_0$, use
$G_t=H_t\beta^\star+n_0\xi_{1:t}$, and restrict to $S$ to obtain
\eqref{eq:persistent-sign-identity}. The bound follows from the triangle
inequality and the upper spectral bound on $\overline H_{t,SS}$.
\end{proof}

For the nominal high-dimensional schedule
$\lambda_j\asymp\sqrt{\log p/(jn_0)}$,
$L_t\asymp\sqrt{t\log p/n_0}$. On the usual score-concentration event,
persistent signed-support recovery and $\ell_2$ consistency are incompatible
whenever $st\log p/n_0\not\to0$. This obstruction involves only the active
block; the full cumulative Gram matrix may remain singular.

\subsection{Support extinction}

\begin{example}[Support extinction under a batchwise underdetermined design]
\label{ex:support-extinction}
Suppose $s<n_0<p$ and, after ordering coordinates by
$S=\operatorname{supp}(\beta^\star)$, each noiseless batch satisfies
\[
  \frac{1}{n_0}X_t^{\mathsf T}X_t=
  \begin{pmatrix}
    I_s&0\\
    0&B_t
  \end{pmatrix},
  \qquad
  B_t\succeq0,\quad
  \operatorname{rank}(B_t)\le n_0-s,\qquad
  Y_t=X_{t,S}\beta_S^\star.
\]
Initialize $\widehat w_0^{\mathrm{dir}}=0$. Then
$\widehat w_{t,S^c}^{\mathrm{dir}}=0$ for all $t$, and every $k\in S$ obeys
\begin{equation}
  \widehat w_{t,k}^{\mathrm{dir}}
  =
  \mathcal S_{\lambda_t}\left\{
    \frac{
      \beta_k^\star+(t-1)\widehat w_{t-1,k}^{\mathrm{dir}}
    }{t}
  \right\},
  \qquad
  \mathcal S_a(u)=\operatorname{sgn}(u)(|u|-a)_+.
  \label{eq:support-extinction-recurrence}
\end{equation}
Equivalently, for
$c_{t,k}=t|\widehat w_{t,k}^{\mathrm{dir}}|$,
\[
  c_{t,k}
  =
  \left[
    c_{t-1,k}+|\beta_k^\star|-t\lambda_t
  \right]_+.
\]
For
$\lambda_t=c_\lambda\sqrt{\log(ep)/(tn_0)}$, $t\lambda_t\to\infty$.
Hence, for every $k\in S$, there is a finite $\tau_k$ such that
$\widehat w_{t,k}^{\mathrm{dir}}=0$ for all $t\ge\tau_k$, and thus
$\widehat w_t^{\mathrm{dir}}\to0$. The construction remains batchwise
underdetermined for arbitrarily large $p$, because only the $s$-dimensional
active block is required to be orthogonal.
\end{example}

\begin{proof}
On the active coordinates, the current and accumulated quadratic terms in
\eqref{eq:direct-stage} reduce, after division by $tn_0$, to a squared loss
centered at
$(\beta_k^\star+(t-1)\widehat w_{t-1,k}^{\mathrm{dir}})/t$.
Soft thresholding therefore gives \eqref{eq:support-extinction-recurrence},
and multiplication by $t$ gives the recursion for $c_{t,k}$. On $S^c$, the
response is zero and the previous center is zero; the $\ell_1$ penalty selects
zero at every step. Since $t\lambda_t\to\infty$, the decrements
$t\lambda_t-|\beta_k^\star|$ eventually become positive and have a divergent
sum. Thus $c_{t,k}$ reaches zero in finite time and remains zero thereafter.
\end{proof}

\subsection{Additional state-transmission diagnostics}
\label{app:state-transmission-diagnostics}

This subsection reports the numerical diagnostics underlying
Figure~\ref{fig:mechanism-noisy} and its penalty-constant sensitivity, and is
the noisy, correlated-design counterpart to the deterministic construction
above. Across the 50-seed noisy experiment
($p=200$, $s=10$, $n_t=40$, $T=30$, AR(1) $\rho=0.5$, $\sigma=1$,
$c_\lambda=1$), every solve converged (KKT residual: max $5.4\times10^{-9}$,
median $1.6\times10^{-9}$, over $1500$ rows and four estimators). The
Proposition~\ref{thm:direct-accumulation} identity holds to a relative error
of at most $1.7\times10^{-8}$ (median $4.7\times10^{-15}$), and the
SP-CL/offline pooled-Lasso objective gap remains at floating-point precision
throughout (max $8.9\times10^{-16}$, median exactly $0$). The single largest
value in each of these diagnostics traces to one seed at task $5$, where the
pooled design was exactly square ($N_5=p=200$) and produced a near-singular
Gram matrix: this inflates only the carrier-reconstruction (pseudoinverse)
residual, to $4.0\times10^{-7}$ at that cell, not the pooled-objective
equivalence, which for that same seed and task remains at $2.5\times10^{-15}$.
An analogous but smaller effect (residuals up to about $10^{-7}$ in a small
number of near-singular pooled-design cells) appears in the task-shift
experiments of Section~\ref{sec:experiments}, for the same conditioning
reason and with no loss of objective equivalence there either.

\begin{figure}[t]
  \centering
  \includegraphics[width=0.55\textwidth]{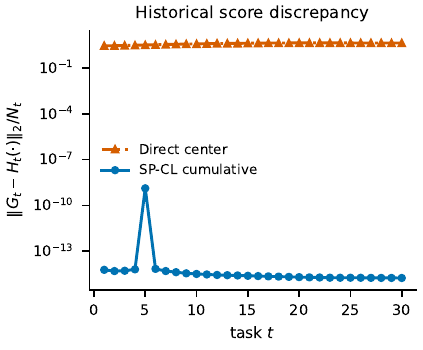}
  \caption{Historical score discrepancy
  $\lVert G_t-H_t(\cdot)\rVert_2/N_t$ under the same noisy, correlated
  design as Figure~\ref{fig:mechanism-noisy}, for direct-center ($\widehat
  w_t^{\mathrm{dir}}$) and SP-CL ($u_t$), log scale.}
  \label{fig:mechanism-noisy-score-gap}
\end{figure}

Figure~\ref{fig:mechanism-noisy-score-gap} plots this discrepancy directly:
SP-CL stays at floating-point precision at every task, while direct-center's
remains persistently positive and does not shrink as more data arrive.

At the final task, direct-center's mean population MSE is $10.99$, within
$0.1\%$ of the fully-shrunk null-predictor floor
($\sigma^2+\norm{\beta^\star}_{\Sigma}^2\approx11$), against $1.06$ for both
SP-CL and offline pooled; its mean true-positive rate on the $10$-coordinate
active support falls to $0.046$, against $1.000$ for SP-CL and offline
pooled.

\paragraph{Penalty-constant sensitivity.}
Repeating this experiment (same 50 seeds) for
$c_\lambda\in\{0.25,0.5,0.75,1,1.25\}$, Figure~\ref{fig:lambda-robustness}
shows that SP-CL's score discrepancy remains at floating-point precision at
every tested value, while direct-center's remains persistently positive
throughout the same range: the mechanism itself does not depend on the
penalty constant. The downstream severity does: at $c_\lambda=0.25$,
direct-center retains $66\%$ of the true coefficient magnitude on average
and its MSE ($2.2$) is far closer to SP-CL's ($1.06$) than to the null floor
($\approx11$); support collapse and near-null MSE are specific to
$c_\lambda\gtrsim0.75$ in this design. We do not claim direct-center
collapses for every statistically motivated penalty constant, only that it
does so robustly at and above the order used as the main setting.

\begin{figure}[t]
  \centering
  \includegraphics[width=\textwidth]{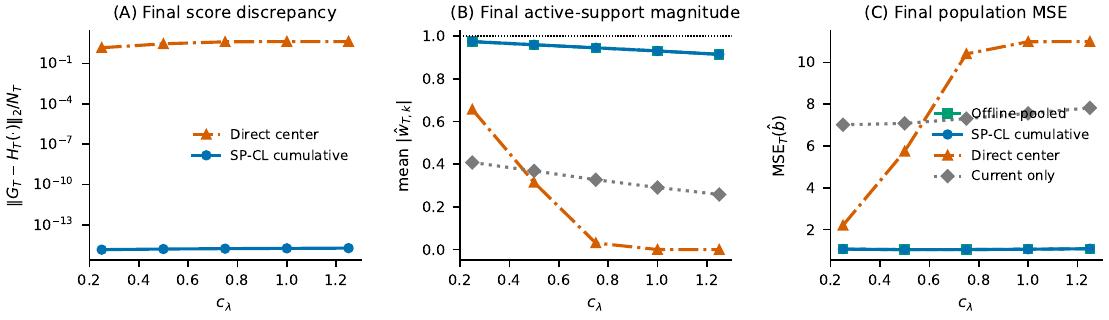}
  \caption{Final-task ($t=T$) sensitivity to the penalty constant $c_\lambda$,
  same 50 seeds reused across $c_\lambda\in\{0.25,0.5,0.75,1,1.25\}$. Panel
  (A): score discrepancy. Panel (B): active-support magnitude. Panel (C):
  population MSE.}
  \label{fig:lambda-robustness}
\end{figure}

\subsection{Additional sparse-efficiency results}
\label{app:sparse-efficiency}

Figure~\ref{fig:highdim-estimation} gives the squared estimation-error trace
for the high-dimensional experiment in Section~\ref{sec:experiments}. Its
ordering agrees with the risk comparison in Figure~\ref{fig:highdim-sparse}.
This experiment is not an exact variable-selection setting: the final-task
false discovery rate of SP-CL is approximately $0.47$. Its purpose is instead
to quantify the finite-sample prediction benefit of sparse estimation when
$p>N_T$.

\begin{figure}[t]
  \centering
  \includegraphics[width=0.58\textwidth]{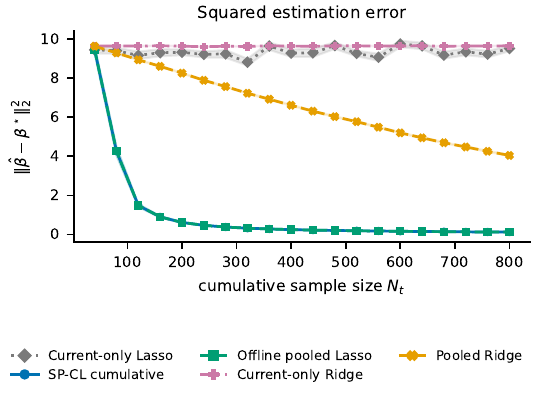}
  \caption{Squared $\ell_2$ estimation error in the high-dimensional
  sparse-versus-dense experiment of Figure~\ref{fig:highdim-sparse}. Pooled
  Ridge uses the pilot-tuned task-specific penalty schedule.}
  \label{fig:highdim-estimation}
\end{figure}

\subsection{Task-shift simulation construction}
\label{app:task-shift-simulation}

For Figure~\ref{fig:transfer-geometry}, we use $p=200$, $s=10$, and equal
batch sizes $n_t=40$. The fixed-support task path is
\[
  \beta_k^\star=\beta_0^\star+\frac{h(k-1)}{T-1}v,
\]
where $\norm{v}_1=1$ and $v$ is supported on five of the ten active
coordinates. Thus the final coefficient is within $\ell_1$ distance $h$ of
every earlier coefficient. We vary
$h\in\{0,1,2,4,6,8,12,16\}$ and
$m=N_{t-1}/n_t\in\{1,2,4,8,16\}$, using $T=m+1$ equal-size batches.
Each cell averages 50 common-random-number replicates, so the current-only
fit is shared across values of $m$ for a fixed $(h,\mathrm{seed})$.

\subsection{Unavoidable single-head conflict}
\label{app:single-head-conflict}

For completeness, any one coefficient must incur nonzero loss on at least one
of two tasks with distinct population optima:
\begin{proposition}[Unavoidable single-head conflict]
\label{prop:single-head}
For any two tasks $j<t$,
\begin{equation}
  \inf_{b\in\R^p}\max\{\risk_j(b),\risk_t(b)\}
  =\frac14\norm{\beta_j^\star-\beta_t^\star}_{\Sigma}^2.
  \label{eq:single-head-bound}
\end{equation}
\end{proposition}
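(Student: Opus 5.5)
The plan is to reduce the minimax problem to a one-dimensional geometric statement in the Hilbert space $(\R^p,\langle\cdot,\cdot\rangle_\Sigma)$. Since $\risk_j(b)=\norm{b-\beta_j^\star}_\Sigma^2$ and $\risk_t(b)=\norm{b-\beta_t^\star}_\Sigma^2$ under the common covariance $\Sigma$ of Condition~\ref{cond:design}, we want to show that the smallest $\Sigma$-ball containing two points has radius equal to half their distance. Write $a=\beta_j^\star$, $c=\beta_t^\star$ and $D=\norm{a-c}_\Sigma$. Because $\Sigma\succ0$, $\norm{\cdot}_\Sigma$ is a genuine norm induced by an inner product, so the triangle inequality is available.

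\emph{Lower bound.} For any $b$, the triangle inequality gives
\[
D\le\norm{b-a}_\Sigma+\norm{b-c}_\Sigma\le 2\max\{\norm{b-a}_\Sigma,\norm{b-c}_\Sigma\}.
\]
Squaring yields $\max\{\risk_j(b),\risk_t(b)\}\ge D^2/4$.

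\emph{Upper bound.} Take the midpoint $b=(a+c)/2$. Then $\risk_j(b)=\risk_t(b)=\norm{(c-a)/2}_\Sigma^2=D^2/4$. The infimum is therefore attained and equals $\frac14\norm{\beta_j^\star-\beta_t^\star}_\Sigma^2$.

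There is no real obstacle here. The only point needing care is that the statement relies on a common covariance, so both risks are measured in the same norm. If the task covariances differed, the midpoint argument would fail, and one would instead need to minimize the maximum of two different quadratics. So I would state explicitly that we work under Condition~\ref{cond:design}, or more generally under $\Sigma_j=\Sigma_t=\Sigma\succ0$. It is also worth noting that when $\beta_j^\star\ne\beta_t^\star$ the bound is strictly positive. This is exactly the "nonzero loss on at least one task" claim preceding the proposition.
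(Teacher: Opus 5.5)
Your proof is correct and follows the paper's argument: the triangle inequality in $\norm{\cdot}_\Sigma$ gives the lower bound after squaring, and the midpoint $(\beta_j^\star+\beta_t^\star)/2$ attains it. Your remark that the statement needs a common covariance, so that both risks are measured in the same norm, makes explicit an assumption the paper leaves implicit by stating the result in its common-covariance setting.
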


\begin{proof}
The triangle inequality gives
$\norm{\beta_j^\star-\beta_t^\star}_{\Sigma}
\leq\norm{b-\beta_j^\star}_{\Sigma}+\norm{b-\beta_t^\star}_{\Sigma}$.
The lower bound follows after squaring, and equality is attained at
$b=(\beta_j^\star+\beta_t^\star)/2$.
\end{proof}

\section{Proof of Theorem~\ref{thm:transfer}}
\label{app:transfer-proof}

Parts~(a) and~(b) are the standard single-task and sparse-anchor Lasso
bounds, proved at the end of this section. We prove part~(c) by a
confidence-level specialization of \citet{tianfeng2023}; $c,C$ denote
constants that may change between lines and depend only on
$\kappa_l,\kappa_u$.

\textbf{Step 1 (Exact offline reduction).} By Theorem~\ref{thm:exact},
$\widehat w_t$ is exactly a minimizer of the offline pooled Lasso
\eqref{eq:offline-pooled} formed from $\mathcal D_1,\ldots,\mathcal D_t$,
with no dependence on how the inaccessible batches $\mathcal D_j$, $j<t$,
were summarized. Consequently the pair $(\widehat w_t,\widehat\Delta_t)$
defined by \eqref{eq:sp-stage-one} and \eqref{eq:target-correction} is
identical, as a random pair given $(\mathcal D_1,\ldots,\mathcal D_t)$, to
the output of Algorithm~1 (A-Trans-GLM) of \citet{tianfeng2023} specialized
to $\psi(u)=u^2/2$, run with target sample $\mathcal D_t$ and source set
$\{\mathcal D_j\}_{j<t}$: their transferring step is our
\eqref{eq:sp-stage-one} divided by $N_t$, and their debiasing step is our
\eqref{eq:target-correction}. This identification holds
exactly and is the only streaming-specific step in the proof; everything
below follows from their offline argument after making its
confidence level explicit.

\textbf{Step 2 (Notation map and population target).} Their target
parameter, target sample size, source total, and pooled total map to our
$\beta_t^\star$, $n_t$, $N_{t-1}$, and $N_t$; their source parameters map to
$\beta_j^\star$, $j<t$; their contrast bound maps to $h$. Under common
covariance, the population version of the transferring step solves
$\Sigma w=\sum_{j\le t}\pi_{j,t}\Sigma\beta_j^\star$, giving
$w_t^\star=\sum_{j\le t}\pi_{j,t}\beta_j^\star$, and
the related-task condition in Section~\ref{sec:oracle-regime} gives
$\norm{\beta_t^\star-w_t^\star}_1\le\sum_{j<t}\pi_{j,t}\norm{\beta_t^\star-\beta_j^\star}_1
\le h$, so $w_t^\star$ plays the role of their $w_{\mathcal A_h}$.

\textbf{Step 3 (Covariance compatibility).} Their Theorem~1 requires a
covariance-compatibility condition (their Assumption~4). Under common
covariance, $\widetilde\Sigma_h=\sum_k\alpha_k\Sigma=\Sigma$ and
$\Sigma_h^{(k)}=\Sigma$ for every historical task $k$, so
$\widetilde\Sigma_h^{-1}\Sigma^{(k)}=I_p$, whose induced $\ell_1$-operator
norm is $1<\infty$: the condition holds automatically, with no separate
assumption needed.

\textbf{Step 4 (Confidence-level transfer bound).} With $\psi(u)=u^2/2$,
their Assumption~3(i) holds with $M_\psi=1$. We retain a generic failure
budget in the concentration argument underlying their Theorem~1. For
$\xi\in(0,1)$, write
\[
  L(\xi)=\log\!\left(\frac{C'_{\mathrm{u}}p}{\xi}\right).
\]
Under Condition~\ref{cond:design}, the finitely many sub-Gaussian score and
empirical-covariance concentration events used in that proof hold jointly
with probability at least $1-\xi$ after taking $C'_{\mathrm{u}}$ sufficiently
large and replacing $\log p$ by $L(\xi)$ in the penalties and sample-size
requirements. The remaining oracle inequalities in their proof are
deterministic on this joint event. Thus their argument, translated via Step~2
($n_0\to n_t$, $n_{\mathcal A_h}\to N_{t-1}$,
$n_{\mathcal A_h}+n_0\to N_t$, and their target sparsity to
$s=\norm{\beta_t^\star}_0$), gives the following confidence-level form: if
$h\sqrt{L(\xi)/n_t}=o(1)$ along the same asymptotic sequence,
$h\le C\sqrt s$,
$n_t\ge CL(\xi)$, and $N_{t-1}\ge CsL(\xi)$, then, with
\[
 \begin{aligned}
 \lambda_{w,t}&=C_w\sqrt{L(\xi)/N_t},
 & \lambda_{\Delta,t}&=C_\Delta\sqrt{L(\xi)/n_t},\\[-1mm]
 \norm{\widehat\beta_t-\beta_t^\star}_2
 &\lesssim \sqrt{\frac{sL(\xi)}{N_t}}
 +\left[\left(\frac{L(\xi)}{n_t}\right)^{1/4}h^{1/2}\wedge h\right],\\
 \norm{\widehat\beta_t-\beta_t^\star}_1
 &\lesssim s\sqrt{\frac{L(\xi)}{N_t}}+h .
 \end{aligned}
\]
with probability at least $1-\xi$. This is the confidence-level form of
their inequalities~(6)--(7) after the translation above; it retains the same
joint high-probability event. In particular, the argument does not, and cannot,
condition on $\widehat w_t$ and treat the current design as independent of it,
because $\widehat w_t$ is fit using the current batch through the pooled
objective.

\textbf{Step 5 (Squaring).} Squaring and using $(a+b)^2\le2a^2+2b^2$ to
absorb the cross term into the sum of squared orders gives
\[
 \norm{\widehat\beta_t-\beta_t^\star}_{\Sigma}^2
 \lesssim \frac{sL(\xi)}{N_t}+\left(h\sqrt{\frac{L(\xi)}{n_t}}\wedge h^2\right),
\]
on the same event, using the upper eigenvalue bound in
Condition~\ref{cond:design}. The second line in Step~4 gives the
$\ell_1$ bound.

\textbf{Parts (a) and (b).} For every $\xi\in(0,1)$, the standard Lasso
score and restricted-eigenvalue events under Condition~\ref{cond:design}
hold with probability at least $1-\xi$ when the penalties and sample sizes
use $L(\xi)$ \citep{bickel2009}. Thus part~(a) gives
$\norm{\widehat\beta_t^{\mathrm{cur}}-\beta_t^\star}_{\Sigma}^2
\lesssim sL(\xi)/n_t$ and
$\norm{\widehat\beta_t^{\mathrm{cur}}-\beta_t^\star}_1
\lesssim s\sqrt{L(\xi)/n_t}$ once $n_t\ge CsL(\xi)$. Its sample-size
requirement comes from the restricted-eigenvalue property over $s$-sparse
directions, unlike Tian--Feng's $n_0\ge C\log p$ condition, which controls
their debiasing step. Part~(b) additionally
assumes $\norm{w_t^\star}_0\le s_{w,t}$: since $\widehat w_t$ is exactly the
offline pooled Lasso targeting the $s_{w,t}$-sparse $w_t^\star$, the standard
Lasso bound gives $\norm{\widehat w_t-w_t^\star}_\Sigma^2
\lesssim s_{w,t}L(\xi)/N_t$ and
$\norm{\widehat w_t-w_t^\star}_1
\lesssim s_{w,t}\sqrt{L(\xi)/N_t}$ once
$N_t\ge Cs_{w,t}L(\xi)$. Adding
$D_t^2=\norm{w_t^\star-\beta_t^\star}_\Sigma^2$ by the triangle inequality
(again via $(a+b)^2\le2a^2+2b^2$) gives the current-task risk bound in
part~(b). This $\ell_1$ bound is used in
Lemma~\ref{lem:offset-stability} of Appendix~\ref{app:selection}.

\textbf{Step 6 (Anytime uniformization).} For each $t\ge1$, set
$\xi_t=\delta/[3t(t+1)]$ and choose $C_{\mathrm{u}}=3C'_{\mathrm{u}}$ in
Theorem~\ref{thm:transfer}, so that $L(\xi_t)=L_{t,\delta}$. The three
events establishing parts~(a), (b), and (c) at time $t$ then each fail with
probability at most $\xi_t$. A union bound over all times and all three
event classes gives
\[
  \Pbb\!\left(\bigcup_{t\ge1}\bigcup_{r\in\{\mathrm{cur},\mathrm{cum},\mathrm{ad}\}}E_{t,r}^c\right)
  \le 3\sum_{t\ge1}\xi_t
  =\delta,
\]
where $E_{t,r}$ denotes the event establishing assertion $r$ at time $t$.
Thus all displayed bounds hold simultaneously for every time at which their
respective assumptions are satisfied. No independence across time is required.

\section{Support-recovery consequences}
\label{app:selection}

This appendix records standard variable-selection consequences of the exact
reduction. It gives the additional conditions and proof for the signed-support
claim in Corollary~\ref{cor:shared-rate}, and records its related-task
analogue. These stronger conditions make clear that score
preservation transfers the relevant offline Lasso variable-selection guarantees.

\subsection{Conditions and proof for shared-coefficient signed-support recovery}
\label{app:shared-selection}

Let $S=\operatorname{supp}(\beta^\star)$ and
$\beta_{\min}=\min_{k\in S}|\beta_k^\star|$. In addition to the estimation
conditions of Corollary~\ref{cor:shared-rate}, suppose there are constants
$\kappa_S,M_S<\infty$ and $\gamma\in(0,1]$ such that
\[
  \lambda_{\min}(\Sigma_{SS})\ge\kappa_S,
  \qquad
  \norm{\Sigma_{SS}^{-1}}_\infty\le M_S,
  \qquad
  \norm{\Sigma_{S^cS}\Sigma_{SS}^{-1}}_\infty\le1-\gamma.
\]
Assume also that $N_t\gtrsim s^2\log p$ and
$\beta_{\min}\ge C_{\mathrm{sel}}\lambda_{w,t}$ for a sufficiently large
constant $C_{\mathrm{sel}}$. These are the additional conditions for its
signed-support conclusion. The constant implicit in
$\lambda_{w,t}\asymp\sqrt{\log p/N_t}$ is taken sufficiently large to
dominate the score noise.

\begin{proof}[Proof of the signed-support claim in Corollary~\ref{cor:shared-rate}]
Theorem~\ref{thm:exact} identifies $\widehat w_t$ with the offline pooled
Lasso. The stated irrepresentability, beta-min, and sample-size conditions
therefore yield the usual primal--dual-witness signed-support argument
\citep{wainwright2009}.
\end{proof}

\subsection{Related-task selection}

Write
\[
  S_{w,t}=\operatorname{supp}(w_t^\star),\qquad
  S_{\Delta,t}=\operatorname{supp}(\Delta_t^\star),\qquad
  S_{\beta,t}=\operatorname{supp}(\beta_t^\star),
\]
and let $s_{\Delta,t}=|S_{\Delta,t}|$. In addition to
Condition~\ref{cond:design}, $(\beta_1^\star,\ldots,\beta_t^\star)\in
\Theta_t(s,h)$, and the sparse-anchor condition
$\norm{w_t^\star}_0\le s_{w,t}$, suppose the following hold for target
time $t$:
\begin{enumerate}[label=(\roman*),leftmargin=*]
  \item For each $A\in\{S_{w,t},S_{\Delta,t}\}$,
  \[
    \lambda_{\min}(\Sigma_{AA})\ge\kappa_A,\qquad
    \norm{\Sigma_{AA}^{-1}}_\infty\le M_A,\qquad
    \norm{\Sigma_{A^cA}\Sigma_{AA}^{-1}}_\infty\le1-\gamma_A
  \]
  for positive constants $\kappa_A,M_A$ and $\gamma_A\in(0,1]$.
  \item The covariance entries are uniformly bounded,
  $\max_{k,\ell\le p}|\Sigma_{k\ell}|\le M_\Sigma$, and, under the anytime
  calibration of Theorem~\ref{thm:transfer}, the sample sizes and penalties
  obey
  \[
    N_t\gtrsim s_{w,t}^2L_{t,\delta},\qquad
    n_t\gtrsim s_{\Delta,t}^2L_{t,\delta},\qquad
    s_{w,t}\lambda_{w,t}\le c_\Delta\lambda_{\Delta,t},
  \]
  where $c_\Delta$ is sufficiently small relative to
  $\gamma_{S_{\Delta,t}}$.
  \item The sparse anchor and contrast are support-compatible,
  $S_{\beta,t}=S_{w,t}\cup S_{\Delta,t}$, and satisfy the beta-min conditions
  \[
    \min_{k\in S_{w,t}}|w_{t,k}^\star|\ge C_w\lambda_{w,t},\qquad
    \min_{k\in S_{\Delta,t}}|\Delta_{t,k}^\star|
    \ge C_\Delta\lambda_{\Delta,t},
  \]
  \[
    \min_{k\in S_{\beta,t}}|\beta_{t,k}^\star|
    \ge C_\beta(\lambda_{w,t}+\lambda_{\Delta,t})
  \]
  for sufficiently large constants determined by the selection geometry.
\end{enumerate}

\begin{lemma}[First-stage score stability]
\label{lem:offset-stability}
Under Condition~\ref{cond:design}, the sparse-anchor condition
$\norm{w_t^\star}_0\le s_{w,t}$ used in Theorem~\ref{thm:transfer}(b), and
the related-task selection conditions above, for a sufficiently large
penalty constant in
$\lambda_{\Delta,t}$,
\begin{equation}
  \left\|\frac{1}{n_t}X_t^{\mathsf T}X_t
  (\widehat w_t-w_t^\star)\right\|_\infty
  \le \frac{\gamma_{S_{\Delta,t}}}{8}\lambda_{\Delta,t}
  \label{eq:offset-stability}
\end{equation}
with probability $1-o(1)$.
\end{lemma}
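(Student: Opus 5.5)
We bound the left-hand side of \eqref{eq:offset-stability} by splitting the current-batch Gram matrix into its population part and a fluctuation, and then use the $\ell_1$ rate from Theorem~\ref{thm:transfer}(b). Write $e_t=\widehat w_t-w_t^\star$ and $\widehat\Sigma_t=n_t^{-1}X_t^{\mathsf T}X_t$. Then
\[
  \norm{\widehat\Sigma_te_t}_\infty
  \le \norm{\Sigma e_t}_\infty+\norm{(\widehat\Sigma_t-\Sigma)e_t}_\infty
  \le \Big(\max_{k,\ell}|\Sigma_{k\ell}|+\max_{k,\ell}|(\widehat\Sigma_t-\Sigma)_{k\ell}|\Big)\norm{e_t}_1 .
\]
This uses only H\"older's inequality. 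Working with this entrywise form avoids any conditioning on $\widehat w_t$, which matters because $\widehat w_t$ depends on the current batch $X_t$.

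\textbf{Step 1.} On the event of Theorem~\ref{thm:transfer}(b), which holds with probability at least $1-\delta$, we have $\norm{e_t}_1\lesssim s_{w,t}\sqrt{L_{t,\delta}/N_t}\asymp s_{w,t}\lambda_{w,t}$. The sample-size requirement $N_t\gtrsim s_{w,t}L_{t,\delta}$ is implied by condition (ii).

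\textbf{Step 2.} Under Condition~\ref{cond:design}, the products $x_{ti,k}x_{ti,\ell}$ are sub-exponential. A Bernstein bound with a union bound over $p^2$ pairs gives
\[
  \max_{k,\ell}|(\widehat\Sigma_t-\Sigma)_{k\ell}|\lesssim\sqrt{L_{t,\delta}/n_t}+L_{t,\delta}/n_t
\]
with probability $1-o(1)$. Since $n_t\gtrsim L_{t,\delta}$, the right-hand side is $O(1)$. Combined with condition (ii), which gives $\max_{k,\ell}|\Sigma_{k\ell}|\le M_\Sigma$, the bracket above is at most a constant $C_0$ depending on $M_\Sigma$ and the sub-Gaussian constants.

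\textbf{Step 3.} Combining the two steps gives $\norm{\widehat\Sigma_te_t}_\infty\le C_0C_1 s_{w,t}\lambda_{w,t}$, where $C_1$ is the constant in the $\ell_1$ bound. Condition (ii) supplies $s_{w,t}\lambda_{w,t}\le c_\Delta\lambda_{\Delta,t}$, so the left-hand side is at most $C_0C_1c_\Delta\lambda_{\Delta,t}$. Choosing $c_\Delta\le\gamma_{S_{\Delta,t}}/(8C_0C_1)$ yields \eqref{eq:offset-stability}. Condition (ii) permits exactly this, since it takes $c_\Delta$ sufficiently small relative to $\gamma_{S_{\Delta,t}}$. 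The lemma's phrase ``sufficiently large penalty constant in $\lambda_{\Delta,t}$'' expresses the same requirement: for fixed $C_w$, a large $C_\Delta$ makes the ratio $s_{w,t}\lambda_{w,t}/\lambda_{\Delta,t}$ small.

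\textbf{Main obstacle.} The delicate point is making sure the constants $C_0$ and $C_1$ do not themselves depend on $c_\Delta$ or $C_\Delta$, so that choosing $c_\Delta$ small is not circular. They do not: both come only from Condition~\ref{cond:design}, $M_\Sigma$, and the first-stage penalty constant $C_w$, and none of these involves the second stage. Note also that the bound is uniform over all vectors $e_t$ with the stated $\ell_1$ norm, so the dependence between $\widehat w_t$ and $X_t$ is harmless. The probability is $1-\delta-o(1)$, which is $1-o(1)$ when $\delta\to0$.
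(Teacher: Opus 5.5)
Your proposal is correct and follows the paper's proof. Both arguments apply H\"older's inequality, bounding the maximum entry of $X_t^{\mathsf T}X_t/n_t$ by a constant (the paper simply asserts this is $O_{\Pbb}(1)$, and you justify it via the split $\Sigma+(\widehat\Sigma_t-\Sigma)$ and Bernstein) and multiplying by the first-stage $\ell_1$ bound from Theorem~\ref{thm:transfer}(b), then close with $s_{w,t}\lambda_{w,t}\le c_\Delta\lambda_{\Delta,t}$ from condition (ii); your explicit constant tracking and the $1-\delta-o(1)$ caveat just make precise what the paper states loosely in $O_{\Pbb}$ form.
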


\begin{proof}
By Theorem~\ref{thm:exact}, the first stage is the offline pooled Lasso
targeting the $s_{w,t}$-sparse $w_t^\star$, whose standard $\ell_1$ bound
$\norm{\widehat w_t-w_t^\star}_1=O_{\Pbb}(s_{w,t}\lambda_{w,t})$ is proved in
Appendix~\ref{app:transfer-proof} (the argument for Theorem~\ref{thm:transfer}(b)).
The sub-Gaussian design and bounded covariance entries give
$\max_{k,\ell\le p}|(X_t^{\mathsf T}X_t/n_t)_{k\ell}|=O_{\Pbb}(1)$.
The final condition in part~(ii), with a sufficiently large current-stage
penalty constant, therefore yields \eqref{eq:offset-stability}.
\end{proof}

\begin{corollary}[Current-task signed-support recovery]
\label{cor:transfer-selection}
Under Condition~\ref{cond:design}, $(\beta_1^\star,\ldots,\beta_t^\star)\in
\Theta_t(s,h)$, the sparse-anchor condition $\norm{w_t^\star}_0\le
s_{w,t}$, and the related-task selection conditions above,
\begin{equation}
  \Pbb\left\{
    \operatorname{supp}(\widehat\beta_t)=S_{\beta,t},
    \ \operatorname{sgn}(\widehat\beta_{t,S_{\beta,t}})=
    \operatorname{sgn}(\beta_{t,S_{\beta,t}}^\star)
  \right\}=1-o(1).
  \label{eq:transfer-selection}
\end{equation}
\end{corollary}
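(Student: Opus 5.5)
The plan is a two-stage primal--dual witness argument. The first stage is handled by the exact reduction, and the second stage is treated as a Lasso on the current batch with a perturbed score. By Theorem~\ref{thm:exact}, $\widehat w_t$ is an offline pooled Lasso minimizer targeting $w_t^\star$. Under condition~(i) with $A=S_{w,t}$, the sample-size requirement $N_t\gtrsim s_{w,t}^2L_{t,\delta}$, and the beta-min bound on $w_t^\star$, the primal--dual witness argument of \citet{wainwright2009} applies, exactly as in the proof of the signed-support claim in Corollary~\ref{cor:shared-rate}. It gives $\operatorname{supp}(\widehat w_t)=S_{w,t}$ with correct signs and $\norm{\widehat w_t-w_t^\star}_\infty\lesssim\lambda_{w,t}$ with probability $1-o(1)$. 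There is one difference from the shared-coefficient case. The pooled noise now includes the misspecification term $\sum_j X_j^{\mathsf T}X_j(\beta_j^\star-w_t^\star)$. Its mean vanishes under common $\Sigma$ by the definition of $w_t^\star$. Its fluctuation should be bounded by sub-Gaussian concentration, using $\norm{\beta_j^\star-w_t^\star}_1\le 2h$ and $h\sqrt{L_{t,\delta}/n_t}=o(1)$, so that it is absorbed into $\lambda_{w,t}$.

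For the second stage, I will write $Y_t-X_t\widehat w_t=X_t\Delta_t^\star+\varepsilon_t-X_t(\widehat w_t-w_t^\star)$. The correction problem \eqref{eq:target-correction} is then a Lasso for the $s_{\Delta,t}$-sparse target $\Delta_t^\star$. Its effective score is $n_t^{-1}X_t^{\mathsf T}\varepsilon_t-n_t^{-1}X_t^{\mathsf T}X_t(\widehat w_t-w_t^\star)$. The first term is at most $\gamma\lambda_{\Delta,t}/8$ by the usual score event. The second term is at most $\gamma_{S_{\Delta,t}}\lambda_{\Delta,t}/8$ by Lemma~\ref{lem:offset-stability}. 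I then run the primal--dual witness for this problem on $A=S_{\Delta,t}$, using condition~(i) and $n_t\gtrsim s_{\Delta,t}^2L_{t,\delta}$ to replace $\widehat\Sigma_{t,AA}$ and the irrepresentable quantity by their population versions up to $\gamma/4$. Strict dual feasibility holds because the total score perturbation is at most $\gamma\lambda_{\Delta,t}/4$. This gives $\operatorname{supp}(\widehat\Delta_t)\subseteq S_{\Delta,t}$ and $\norm{\widehat\Delta_t-\Delta_t^\star}_\infty\le C M_{S_{\Delta,t}}\lambda_{\Delta,t}$. The beta-min condition on $\Delta_t^\star$ then yields the exact support $S_{\Delta,t}$ with correct signs.

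Finally, I combine the two stages. From $\widehat\beta_t=\widehat w_t+\widehat\Delta_t$, the support of $\widehat\beta_t$ is contained in $S_{w,t}\cup S_{\Delta,t}=S_{\beta,t}$. For $k\in S_{\beta,t}$, the two coordinatewise $\ell_\infty$ bounds give $|\widehat\beta_{t,k}-\beta_{t,k}^\star|\le C(\lambda_{w,t}+\lambda_{\Delta,t})$. The beta-min condition on $\beta_t^\star$ with $C_\beta$ large therefore forces $\widehat\beta_{t,k}\neq0$ with the correct sign. This step matters because cancellation between $\widehat w_{t,k}$ and $\widehat\Delta_{t,k}$ on $S_{w,t}\cap S_{\Delta,t}$ is ruled out only through the $\beta$-level beta-min condition. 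A union bound over the finitely many events completes the proof.

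The main obstacle is the dependence between $\widehat w_t$ and $X_t$: the current batch is used in both stages, so $X_t$ cannot be treated as a fresh design. The plan avoids conditioning altogether. It works on a single joint event where the score, the Gram concentration, Lemma~\ref{lem:offset-stability}, and the first-stage witness all hold, and it treats the offset term as an adversarial but $\ell_\infty$-small perturbation of the second-stage score. The step I would check most carefully is that this perturbation also does not spoil the on-support $\ell_\infty$ bound. It enters as $\widehat\Sigma_{t,AA}^{-1}$ applied to a vector of size $O(\lambda_{\Delta,t})$, and that is controlled by $M_A$ once $\widehat\Sigma_{t,AA}$ is close to $\Sigma_{AA}$ in $\ell_\infty$-operator norm, which holds under $n_t\gtrsim s_{\Delta,t}^2L_{t,\delta}$.
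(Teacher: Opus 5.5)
Your plan follows the paper's argument, with more detail: the exact reduction plus a primal--dual witness for the pooled stage; Lemma~\ref{lem:offset-stability} to treat $n_t^{-1}X_t^{\mathsf T}X_t(\widehat w_t-w_t^\star)$ as an $\ell_\infty$-small perturbation of the contrast-Lasso score; and support compatibility with the $\beta$-level beta-min condition to exclude cancellation. Running deterministic witness arguments on a single joint event is the right way to handle the reuse of $X_t$; the paper makes the same point in Step~4 of Appendix~\ref{app:transfer-proof}. Your second-stage and combination steps are sound.

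The step that does not go through as written is your absorption of the misspecification term in the pooled score. That term is $N_t^{-1}\sum_{j\le t}(X_j^{\mathsf T}X_j-n_j\Sigma)v_j$ with $v_j=\beta_j^\star-w_t^\star$. Bounding it batch by batch via $\norm{v_j}_1\le 2h$ gives order $h\sum_j\pi_{j,t}\sqrt{L_{t,\delta}/n_j}$. Under equal batch sizes this is $h\sqrt t\,\lambda_{w,t}$ up to constants. The condition $h\sqrt{L_{t,\delta}/n_t}=o(1)$ makes this $o(1)$, but it does not control the ratio $h\sqrt t$. It is also a hypothesis of Theorem~\ref{thm:transfer}(c), not of this corollary.

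The right estimate is Bernstein's inequality applied coordinatewise over all $N_t$ independent rows. Its leading term is of order $\{\sum_j\pi_{j,t}\norm{v_j}_2^2\}^{1/2}\sqrt{L_{t,\delta}/N_t}\asymp\sqrt{\mathsf{TC}_t}\,\sqrt{L_{t,\delta}/N_t}$. Your condition $h\sqrt{L_{t,\delta}/n_t}=o(1)$ only disposes of the sub-exponential tail term $\max_j\norm{v_j}_2L_{t,\delta}/N_t$. So the pooled-score bound of order $\gamma_{S_{w,t}}\lambda_{w,t}$ that strict dual feasibility requires, with a fixed penalty constant, needs bounded task conflict. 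For example, $h=O(1)$ suffices, since $\mathsf{TC}_t\le4\kappa_uh^2$. Otherwise $\lambda_{w,t}$ must be inflated by a factor of order $\sqrt{1+\mathsf{TC}_t}$, and the requirement $s_{w,t}\lambda_{w,t}\le c_\Delta\lambda_{\Delta,t}$ in~(ii) must be rechecked.

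The paper's proof does not address this point. It invokes the pooled-Lasso witness, and Lemma~\ref{lem:offset-stability} uses the $\ell_1$ bound of Theorem~\ref{thm:transfer}(b), as if the pooled model were well specified. So the paper relies tacitly on the same condition. You should state it as an explicit assumption rather than derive it from $h\sqrt{L_{t,\delta}/n_t}=o(1)$.
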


\begin{proof}
Theorem~\ref{thm:exact} turns the first stage into the offline pooled Lasso.
The selection geometry and beta-min condition give its primal--dual-witness
support result \citep{wainwright2009}. Lemma~\ref{lem:offset-stability} makes
the residual first-stage offset in \eqref{eq:target-correction} smaller than
the current-stage penalty, so the same argument applies to the contrast Lasso.
Support compatibility and
the final beta-min condition rule out cancellation when the two selected
components are added.
\end{proof}

\section{Proof of the metric decomposition}
\label{app:metric-proof}

\begin{proof}[Proof of Proposition~\ref{prop:metric-decomposition}]
For part~(a), subtract the two FWT definitions in Section~\ref{sec:stream} and use
the definitions of $\mathrm{BC}_k$ and $\mathrm{CG}_k$ in
Section~\ref{sec:metrics} at each task $k$. Part~(b) follows by reweighting the
historical and current MSE terms in those same definitions. For part~(c), substitute
the excess-risk identity in Section~\ref{sec:stream} into the definition of BWT
and apply the same
decomposition as \eqref{eq:conflict-decomposition}, restricted to tasks
$j<t$; the irreducible MSE terms cancel under equal batch weights.
\end{proof}

\section{Optional partial correction under a historical budget}
\label{app:partial-correction}

The two outputs define a simple continuum of deployment rules without changing
the transmitted state. For $\eta\in[0,1]$, let
\begin{equation}
 \widehat\beta_t(\eta)=\widehat w_t+\eta\widehat\Delta_t.
 \label{eq:partial-correction}
\end{equation}
Here $\eta=0$ returns the generalist and $\eta=1$ returns the fully adapted
specialist. The stored state evaluates the historical squared loss, up to a
constant independent of $b$, through
\begin{equation}
 \widehat L_{<t}^{\mathrm{hist}}(b)=
 \frac{1}{2N_{t-1}}\left\{
 b^{\mathsf T}H_{t-1}b-2b^{\mathsf T}H_{t-1}u_{t-1}\right\}.
 \label{eq:historical-state-loss}
\end{equation}
One optional budgeted rule chooses the current-loss-optimal value over a fixed
grid subject to
\begin{equation}
 \widehat L_{<t}^{\mathrm{hist}}\{\widehat\beta_t(\eta)\}-
 \widehat L_{<t}^{\mathrm{hist}}(\widehat w_t)\leq \varepsilon_t.
 \label{eq:historical-budget}
\end{equation}
This is a deployment choice, not a new state-update algorithm.

Its population trade-off is explicit under the common-covariance model. Let
$\rho_t=n_t/N_t$, write $d_t^\star=\beta_t^\star-w_{t-1}^\star$, and set
$b_t^\eta=w_t^\star+\eta(\beta_t^\star-w_t^\star)$. Replacing the estimates in
\eqref{eq:partial-correction} by their population targets yields
\begin{equation}
 \begin{aligned}
 \risk_t(b_t^\eta)
 &=\{1-\gamma_t(\eta)\}^2\norm{d_t^\star}_{\Sigma}^2,\\
 \overline\mse_{<t}(b_t^\eta)-\overline\mse_{<t}(w_{t-1}^\star)
 &=\gamma_t(\eta)^2\norm{d_t^\star}_{\Sigma}^2,
 \end{aligned}
 \label{eq:partial-frontier}
\end{equation}
where $\gamma_t(\eta)=\rho_t+(1-\rho_t)\eta$. Equation
\eqref{eq:partial-frontier} separates the unavoidable conflict of a
single head from the controllable choice of how far to adapt toward the new
task.

\section{Real-Data Application: Climate-Stream Prediction}
\label{app:realdata}

This appendix reports an independent real-data confirmation of
Theorem~\ref{thm:exact} and Proposition~\ref{thm:direct-accumulation} on a
chronological climate-prediction stream, using a separate implementation of
the same estimators written for this application. Because the true
coefficient is unknown here, the analysis emphasizes the score-transmission
diagnostic and final variable identification rather than estimation error
against a ground truth.

\subsection{Data and stream construction}

The predictor is the NCEP--NCAR Reanalysis 1 sea-level-pressure field
\citep{kalnay1996ncep} over the North Pacific/North America window
$0^\circ$--$70^\circ$N, $150^\circ$E--$120^\circ$W, on a $2.5^\circ$ grid
flattened to $p=1073$ grid cells. The response is the one-week-ahead anomaly
in weekly mean temperature at Los Angeles International Airport (NOAA
station USW00023174, $33.938^\circ$N, $118.387^\circ$W)
\citep{menne2012ghcn}; missing daily TAVG values are filled by
$(\mathrm{TMAX}+\mathrm{TMIN})/2$ before weekly averaging. We retain the
extended winter months November--March and pair each week's pressure field
with the following week's temperature anomaly within the same winter, so no
pair spans two winters.

To avoid leakage, all preprocessing is fixed on a 1980--1984 reference
period (week-of-year SLP and temperature climatology, per-cell SLP scaling,
with reference standard deviations below $10^{-8}$ floored at 1) before the
retrospective stream begins; these reference winters are never used for
fitting or testing. Starting in winter 1985, each of $T=8$ chronological
tasks pools four training winters ($n_t\approx81$--$83$ weekly pairs) and
holds out a fifth winter as a task-specific test block ($20$--$21$ pairs);
the test winters are 1989, 1994, 1999, 2004, 2009, 2014, 2019, and 2024. A
final future block (winters 2025--2026, 41 pairs) is reserved for
out-of-sample prediction and is disjoint from every historical test winter.
Because training-block sizes are nearly but not exactly equal, the
transmitted state uses a task-balanced cumulative objective: each task
contributes its empirical average loss.

\subsection{Evaluation and tuning}

We compare Current-only Lasso, Offline pooled Lasso, Direct center, and
SP-CL under a common penalty schedule $\lambda_t=\lambda_1/\sqrt t$, with
$\lambda_1=0.20$ selected by rolling-origin validation on the training
blocks only (grid $\{0.05,0.10,0.20,0.30,0.50,0.80\}$; all held-out test
winters and the future block are excluded from tuning), together with a
zero-anomaly and a persistence (current-week temperature) baseline. For
task-specific test block $j$, let $A_{t,j}$ denote the test mean squared
forecast error (MSFE) of the state deployed after task $t$, evaluated on
block $j\le t$; define the historical forgetting
$\mathrm{MaxF}_t=\max_{j<t}[A_{t,j}-A_{j,j}]_+$ and the historical regret
relative to Offline pooled Lasso
$\mathrm{MReg}_t=\max_{j<t}[A_{t,j}-A_{t,j}^{\mathrm{off}}]_+$, summarized by
$\mathrm{MReg}_T^{\mathrm{hist}}=\max_{2\le t\le T}\mathrm{MReg}_t$ (and
similarly $\mathrm{MaxF}_T^{\mathrm{hist}}$). Historical-test uncertainty
uses a paired percentile bootstrap (2000 replications) over the eight test
winters; the short future block instead uses a winter-stratified
moving-block bootstrap (four-week blocks).

\subsection{Prediction and historical risk}

\begin{table}[H]
\centering
\caption{Climate-stream results with historical test blocks. ``Future
MSFE'' uses the disjoint future block (winters 2025--2026). ``Hist.\ MSFE''
is the final-state average over the eight held-out historical test winters.
$\mathrm{MaxF}_T^{\mathrm{hist}}$ and $\mathrm{MReg}_T^{\mathrm{hist}}$ are
the maxima over time of historical forgetting and of historical regret
relative to Offline pooled Lasso, respectively.}
\label{tab:realdata-climate}
\scriptsize
\setlength{\tabcolsep}{4pt}
\renewcommand{\arraystretch}{1.1}
\begin{tabular}{@{}lcccccc@{}}
\toprule
Method & Future MSFE & $R^2_{\mathrm{OOS}}$ & Hist.\ MSFE &
$\mathrm{MaxF}_T^{\mathrm{hist}}$ & $\mathrm{MReg}_T^{\mathrm{hist}}$ & Final support \\
\midrule
Zero anomaly & 5.256 & 0.000 & -- & -- & -- & -- \\
Persistence & 5.902 & $-0.123$ & -- & -- & -- & -- \\
Current-only Lasso & 4.944 & 0.059 & 3.709 & 2.475 & 1.911 & 13 \\
Offline pooled Lasso & 3.914 & 0.255 & 3.551 & 0.891 & 0.000 & 20 \\
Direct center & 4.524 & 0.139 & 3.980 & 1.639 & 1.353 & 10 \\
SP-CL & 3.914 & 0.255 & 3.551 & 0.891 & 0.000 & 20 \\
\bottomrule
\end{tabular}
\end{table}

Table~\ref{tab:realdata-climate} shows the sparse SLP models improving on
the zero-anomaly and persistence baselines on the future block. SP-CL
coincides with Offline pooled Lasso to numerical tolerance, as
Theorem~\ref{thm:exact} predicts: the empirical score-transmission gap is
$3.6\times10^{-11}$ for SP-CL against $0.391$ for Direct center at the final
task, mirroring the mechanism in Figure~\ref{fig:mechanism-noisy}. The future
block contains only 41 weekly pairs from two winters, so these comparisons
are descriptive: winter-specific future MSFEs are 1.555 (2025) and 6.391
(2026) for Offline pooled Lasso/SP-CL against 1.480 and 7.720 for Direct
center, and a moving-block bootstrap gives a mean paired MSFE difference
(Direct center minus Offline pooled Lasso) of 0.610 on the combined future
block, 95\% interval $[-0.164,1.103]$; Offline pooled Lasso against
zero-anomaly and persistence gives mean differences of $-1.342$
($[-2.106,0.039]$) and $-1.988$ ($[-5.046,-0.628]$), respectively.

The historical test blocks give a more direct forgetting check. A paired
percentile bootstrap over the eight held-out winters gives an average final
historical-test excess MSFE for Direct center relative to Offline pooled
Lasso of $0.430$, 95\% interval $[0.059,0.861]$; the corresponding
Current-only excess is $0.158$, interval $[-0.226,0.553]$ (includes zero).
Direct center selects fewer grid cells (10, against 20 for Offline pooled
Lasso/SP-CL) -- consistent with the accumulated-shrinkage mechanism of
Proposition~\ref{thm:direct-accumulation} rather than superior variable
identification: the Offline pooled Lasso/SP-CL supports coincide exactly,
while Direct center's support has no exact overlap with them, even though
70\% of its selected cells lie within $5^\circ$ and 80\% within $10^\circ$
of a shared-support cell, and the two fitted-value series remain highly
correlated ($0.952$ historical, $0.967$ future).

\subsection{Spatial diagnostics and limitations}

Figure~\ref{fig:realdata-dashboard} plots the future-block MSFE, forgetting
and regret curves, and a batchwise covariance-distance diagnostic
$\lVert\widehat\Sigma_j-\widehat\Sigma_k\rVert_F/(\lVert\widehat\Sigma_j\rVert_F+\lVert\widehat\Sigma_k\rVert_F)$;
its between-task mean ($0.268$) and within-task split-half mean ($0.295$)
are close, and a sample-size-matched calibration in the leading ten
empirical PC directions ($0.306$ versus $0.313$) remains close, so this
diagnostic is descriptive rather than confirmatory evidence of covariate
shift. Figure~\ref{fig:realdata-heatmap} compares final supports spatially,
and Figure~\ref{fig:realdata-signed} additionally shows coefficient signs.
These are exploratory summaries of correlated pressure-grid proxies, not a
causal attribution, and this application illustrates score transmission on
one real chronological stream; it neither verifies the common-covariance
approximation of Section~\ref{sec:oracle-regime} nor rules out
climate-regime drift more generally.

\begin{figure}[H]
\centering
\includegraphics[width=0.98\textwidth]{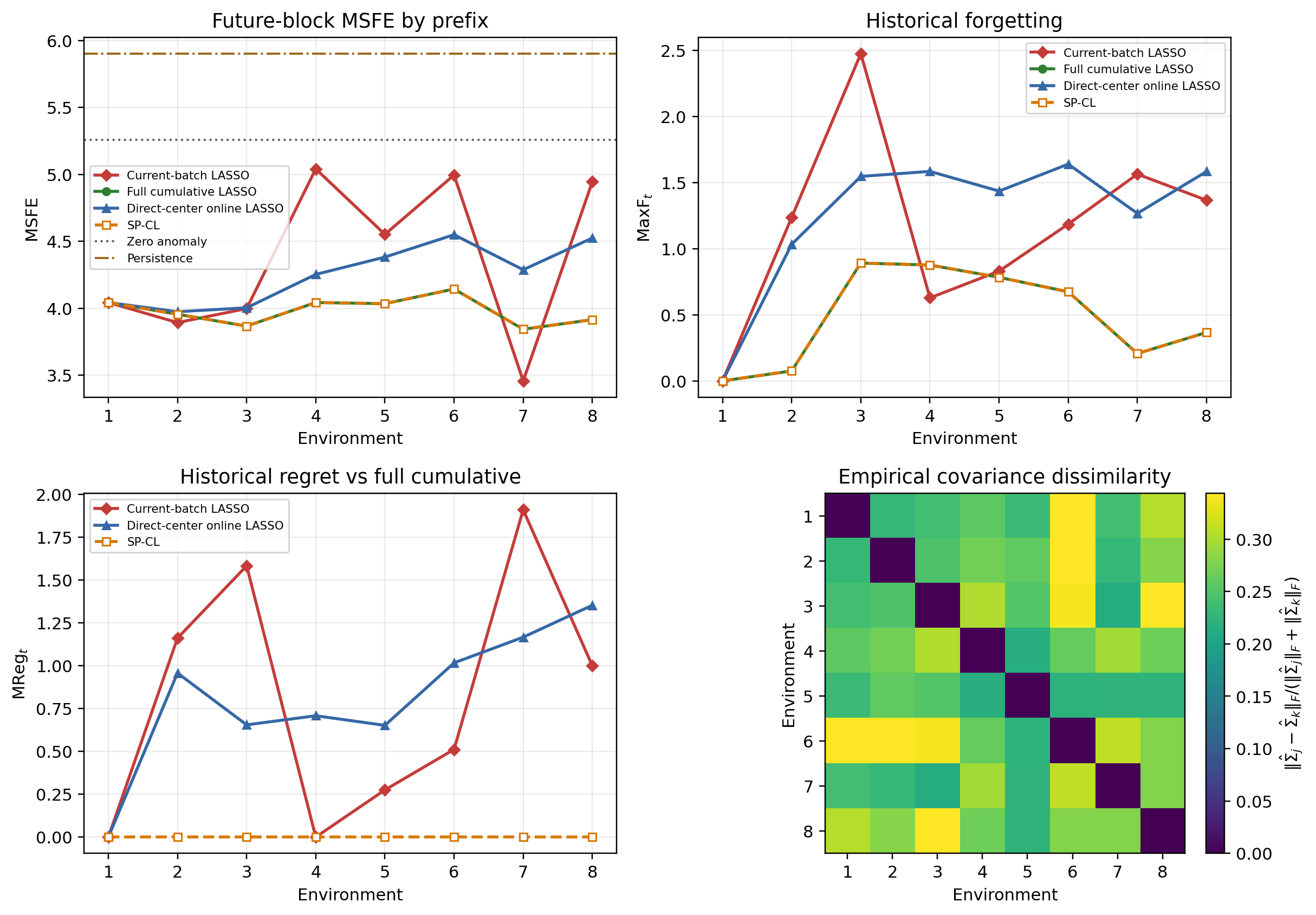}
\caption{Climate-stream historical test-block analysis. Top left: raw
future-block MSFE for each method, with zero-anomaly and persistence
baselines. Top right: historical forgetting $\mathrm{MaxF}_t$. Bottom left:
historical regret $\mathrm{MReg}_t$ relative to Offline pooled Lasso.
Bottom right: pairwise empirical covariance dissimilarities between
chronological training tasks. The common penalty is $\lambda_t=0.20/\sqrt
t$.}
\label{fig:realdata-dashboard}
\end{figure}

\begin{figure}[H]
\centering
\includegraphics[width=0.98\textwidth]{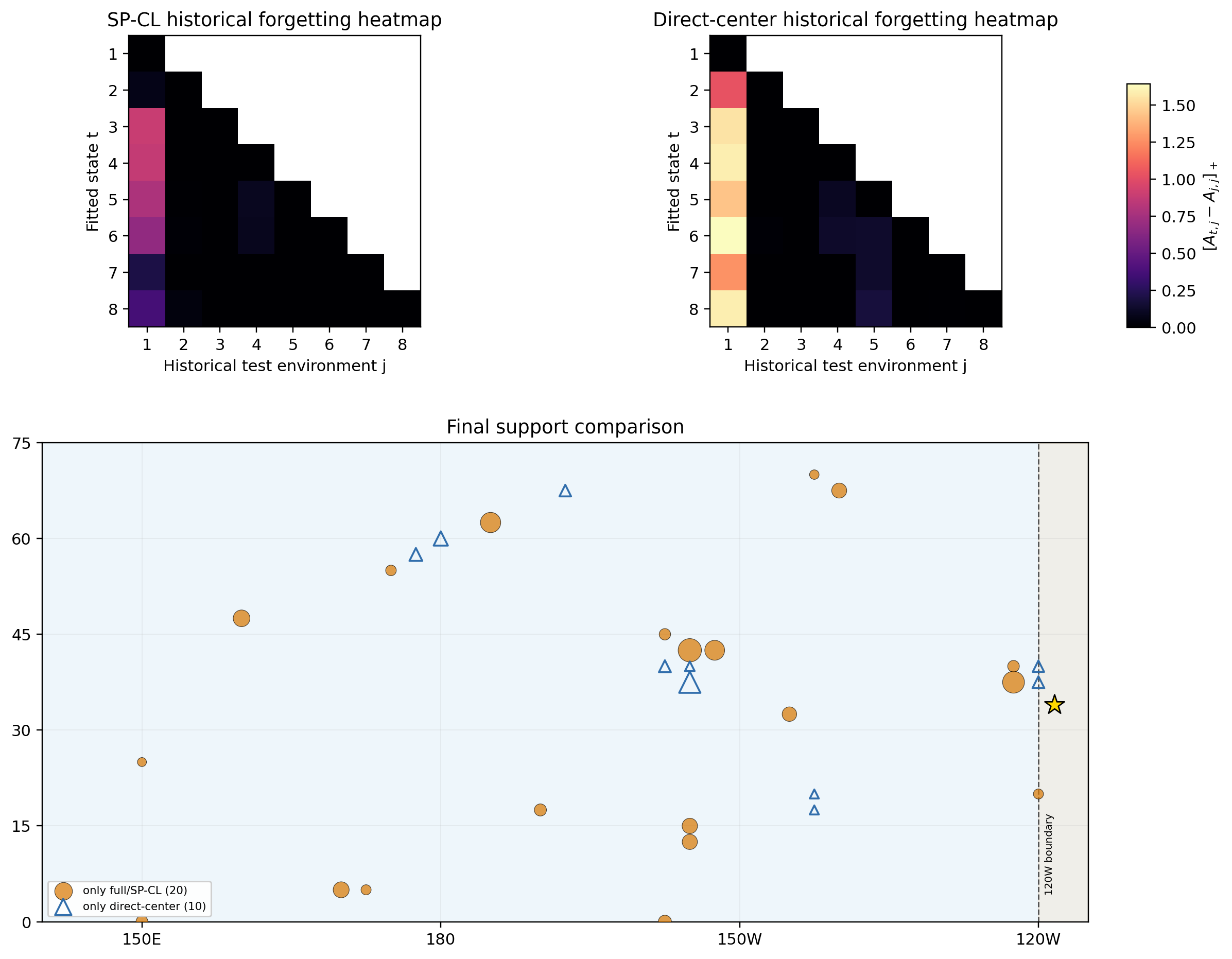}
\caption{Historical forgetting heatmaps and final support comparison. The
heatmaps show $[A_{t,j}-A_{j,j}]_+$ for SP-CL and Direct center on a common
color scale. The map compares final supports: orange points are selected
only by Offline pooled Lasso/SP-CL, blue triangles only by Direct center;
point size uses a common coefficient-magnitude scale, the dashed line marks
the eastern edge of the SLP predictor window at $120^\circ$W, and the star
marks Los Angeles immediately east of that window.}
\label{fig:realdata-heatmap}
\end{figure}

\begin{figure}[H]
\centering
\includegraphics[width=0.98\textwidth]{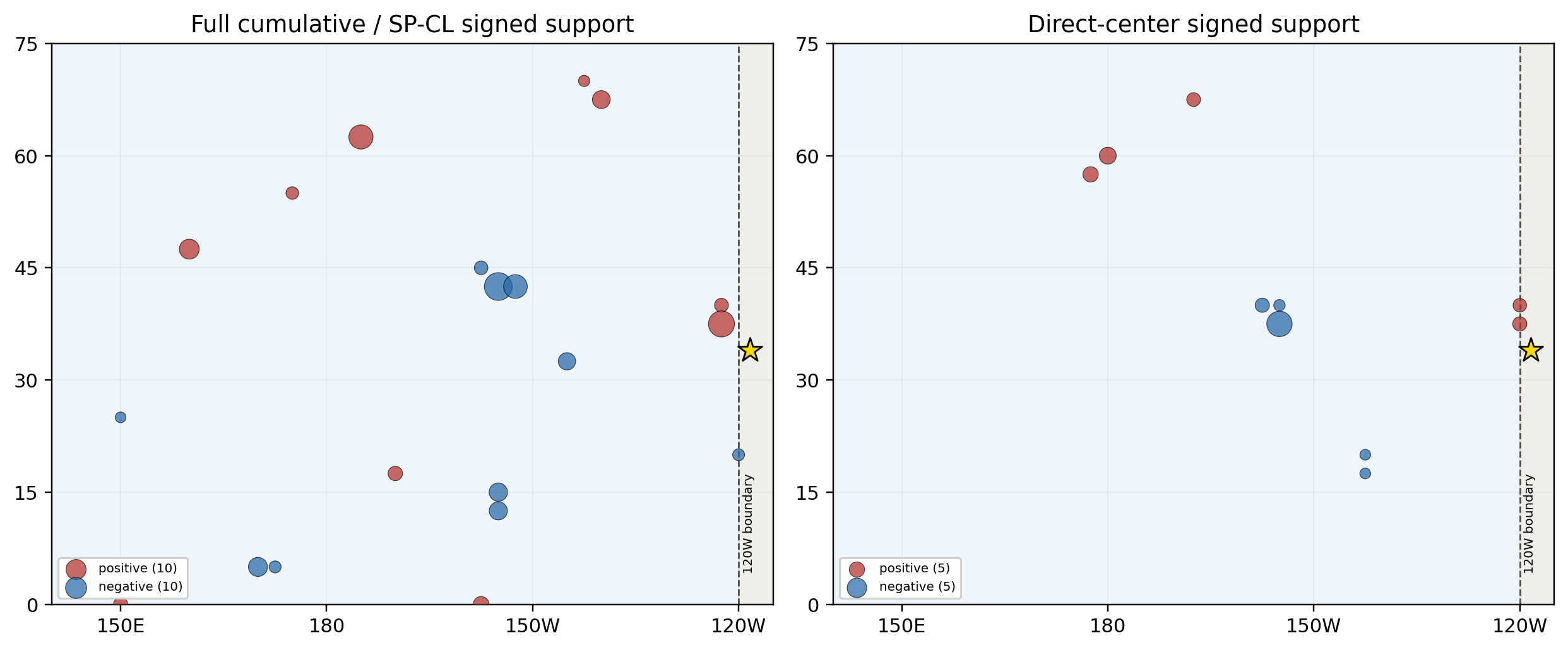}
\caption{Signed final support maps for the climate-stream application. Red
points denote positive coefficients, blue points denote negative
coefficients, with point size on a common absolute-magnitude scale across
panels. These maps separate coefficient sign from the method-overlap
comparison in Figure~\ref{fig:realdata-heatmap}.}
\label{fig:realdata-signed}
\end{figure}

\end{document}